\newcommand{\algcolorC}{brown!40}
\DeclareMathOperator*{\argmax}{argmax}
\newcommand{\bigO}[1]{\mathcal{O}\left(#1\right)}
\newcommand{\spacesymbol}{\textbf{\_}}
\crefname{section}{\S}{\S\S}
\crefname{table}{Tab.}{}
\crefname{figure}{Fig.}{}
\crefname{algorithm}{Alg.}{}
\crefname{equation}{Eq.}{Eq.}
\crefname{definition}{Definition}{Definition}
\crefname{appendix}{App.}{}
\crefname{theorem}{Theorem}{}
\crefname{myexample}{Example}{}
\crefname{myexamplef}{Example}{}
\crefname{mycode}{Code}{}
\crefname{mynote}{Note}{}
\crefname{prop}{Proposition}{}
\crefname{cor}{Corollary}{}
\crefname{observation}{Observation}{}
\crefname{assumption}{Assumption}{}
\crefname{hypothesis}{Hyp.}{Hypotheses}
\newcommand{\defeq}[0]{\mathrel{\stackrel{\textnormal{\tiny def}}{=}}}
\theoremstyle{plain}
\newtheorem{theorem}{Theorem}[section]
\newtheorem{proposition}[theorem]{Proposition}
\newtheorem{myexample}[theorem]{Example}
\newtheorem{lemma}[theorem]{Lemma}
\newtheorem{definition}[theorem]{Definition}
\theoremstyle{remark}
\newcommand{\defn}[1]{\textbf{#1}}
\newcommand*\iftodonotes{\if@todonotes@disabled\expandafter\@secondoftwo\else\expandafter\@firstoftwo\fi} 
\newcommand{\Leftarrowbracketed}{\textbf{($\Leftarrow$):}}
\newcommand{\Rightarrowbracketed}{\textbf{($\Rightarrow$):}}
\newcommand{\mop}[2]{[#1,#2]}
\newcommand{\mopt}[2]{[#1,#2]}
\newcommand{\alphabet}{\Sigma}
\newcommand{\mergealphabet}{\Upsilon_{\alphabet}}
\newcommand{\str}{\boldsymbol{x}}
\newcommand{\enc}{{\small \textsc{Apply}}}
\newcommand{\yield}{{\small \textsc{Yield}}}
\newcommand{\vnu}{\boldsymbol{\nu}}
\newcommand{\merge}{\mu}
\newcommand{\merges}{\boldsymbol{\merge}}
\newcommand{\mergesgreedy}{\merges^{\dagger}}
\newcommand{\mergegreedy}{\merge^{\dagger}}
\newcommand{\mergesopt}{\merges^\star}
\newcommand{\mergeopt}{\merge^\star}
\newcommand{\mergeemph}[1]{\colorbox{gray!30}{#1}}
\newcommand{\sigmaopt}{{\sigma(\mergesopt)}}
\newcommand{\sigmaopthat}{{\hat{\sigma}(\mergesopt)}}
\newcommand{\sigmaoptgreedy}{{\sigma'(\mergesopt, \mergesgreedy)}}
\newcommand{\sigmaoptgreedyhat}{{\hat{\sigma}'(\mergesopt, \mergesgreedy)}}
\newcommand{\validmerges}{\mathcal{M}_{\mergealphabet}}
\newcommand{\prooftext}[1]{\text{\color{black!50}#1}}
\newcommand{\naturals}{\mathbb{N}}
\definecolor{ethblue}{rgb}{0.0,0.0,0.0}
\newcommand{\ethletter}{
    \hspace{-0.5mm}\text{
    \fontfamily{phv}\fontseries{bx}\fontsize{7}{\baselineskip}\selectfont
    \textit{\textbf{\color{ethblue}{E}}}}
}
\definecolor{jhublue}{rgb}{0,0.1,0.4}
\newcommand{\jhuletter}{
    \hspace{-0.5mm}\text{
    \fontfamily{phv}\fontseries{bx}\fontsize{7}{\baselineskip}\selectfont
    \textbf{\color{jhublue}{J}}}
}
\newcommand{\hrefEmail}[2]{\href{mailto:#1}{\color{black}{#2}}}
\title{A Formal Perspective on Byte-Pair Encoding}
\author{
Vilém Zouhar$^{\ethletter}$ \quad Clara Meister$^{\ethletter}$ \quad Juan Luis Gastaldi$^{\ethletter}$ \quad Li Du$^{\jhuletter}$ \\
\bf Tim Vieira$^{\jhuletter}$ \quad Mrinmaya Sachan$^{\ethletter}$ \quad Ryan Cotterell$^{\ethletter}$ \\
\text{} \\
ETH Z{\"u}rich$^{\ethletter}$ \quad Johns Hopkins University$^{\jhuletter}$ \\
\texttt{\{\hrefEmail{vzouhar@ethz.ch}{vzouhar},\hrefEmail{meistecl@ethz.ch}{meistecl},\hrefEmail{gjuan@ethz.ch}{gjuan},\hrefEmail{msachan@ethz.ch}{msachan},\hrefEmail{ryan.cotterell@ethz.ch}{ryan.cotterell}\}@ethz.ch} \\
\texttt{\{\hrefEmail{leodu@cs.jhu.edu}{leodu},\hrefEmail{timv@cs.jhu.edu}{timv}\}@cs.jhu.edu}
}
\begin{document}
\maketitle
\begin{abstract}


Byte-Pair Encoding (BPE) is a popular algorithm used for tokenizing data in NLP, despite being devised initially as a compression method.
BPE appears to be a greedy algorithm at face value, but the underlying optimization problem that BPE seeks to solve has not yet been laid down.
We formalize BPE as a combinatorial optimization problem.
Via submodular functions, we prove that the iterative greedy version is a $\frac{1}{\sigmaopt}(1-e^{-\sigmaopt})$-approximation of an optimal merge sequence, where $\sigmaopt$ is the total backward curvature with respect to the optimal merge sequence $\mergesopt$.
Empirically the lower bound of the approximation is $\approx 0.37$.

We provide a faster implementation of BPE which improves the runtime complexity from $\bigO{N M}$ to $\bigO{N \log M}$, where $N$ is the sequence length and $M$ is the merge count.
Finally, we optimize the brute-force algorithm for optimal BPE using memoization.


\end{abstract}

\section{Introduction}

Byte-Pair Encoding (BPE) is a popular technique for building and applying an encoding scheme to natural language texts.
It is one the most common tokenization methods used for language models \citep{radford2019language,bostrom2020byte,gpt3,scao2022bloom} as well as for various other conditional language modeling tasks, e.g., machine translation \citep{ding2019call} and chatbots \citep{zhang2020dialogpt}.
Despite having been popularized by \citet{sennrich-etal-2016-neural} in NLP as a tokenization scheme, BPE has its roots in the compression literature, where
\citet{gage1994new} introduce the method as a faster alternative to Lempel--Ziv--Welch \cite[13.4]{welch1984technique,Cover2006}.
However, the ubiquity of BPE notwithstanding, the formal underpinnings of the algorithm are underexplored, and there are no existing proven guarantees about BPE's performance.\looseness=-1

The training and applying of BPE are traditionally presented as greedy algorithms, but the exact optimization problems they seek to solve are neither presented in the original work of \citet{gage1994new} nor in the work of \citet{sennrich-etal-2016-neural}.
We fill this void by offering a clean formalization of BPE training as maximizing a function we call compression utility\footnote{How much space the compression saves (\Cref{defn:compression}).} over a specific combinatorial space, which we define in \Cref{def:valid}.
Unexpectedly, we are then able to prove a bound on BPE's approximation error using total backward curvature $\sigmaopt$ \citep{zhang2015string}.
Specifically, we find the ratio of compression utilities between the greedy method and the optimum is bounded below by $\frac{1}{\sigmaopt}(1 - e^{-\sigmaopt})$, which we find empirically $\approx 0.37$ for $\widehat{\sigma}(\mergesopt) = 2.5$.
Our proof of correctness hinges on the theory of submodular functions \cite{krause14survey,bilmes2022submodularity}.\footnote{The proof further relies on a specific property of problem which BPE optimizes that we term hierarchical sequence submodularity. Hierarchical sequence submodularity neither follows from nor implies sequence submodularity, but, nevertheless, bears some superficial similarity to sequence submodularity---hence, our choice of name.} 
Indeed, we are able to prove that compression utility is a special kind of submodular function \cite{malekian2009combinatorial} over a constrained space.
And, despite the presence of the length constraint, which we expound upon formally in \cref{sec:greedy}, we are able to prove a similar bound to $1-\sfrac{1}{e}$ as in the unconstrained case \cite{alaei2010maximizing}.\looseness=-1

Additionally, we give a formal analysis of greedy BPE's runtime and provide a speed-up over the original implementation \cite{gage1994new,sennrich-etal-2016-neural}.
Our runtime improvement stems from the development of a nuanced data structure that allows us to share work between iterations of the greedy procedure and that lends itself to an amortized analysis. 
Specifically, given a string with $N$ characters with a desired merge count of $M$ (usually $N \gg M$), our implementation runs in $\bigO{N \log M}$, an improvement over the $\bigO{N M}$-time algorithm presented by \citet{sennrich-etal-2016-neural} and the $\bigO{N \log N}$ analysis presented by \citet{kudo2018sentencepiece}.
Finally, our formalism allows us to construct an exact program for computing an optimal solution to the BPE training problem. 
Unfortunately, the algorithm runs in exponential time, but it is still significantly faster than a naïve brute-force approach.

Our work should give NLP practitioners confidence that BPE is a wise choice for learning a subword vocabulary based on compression principles.
In general, such constrained submodular maximization problems are hard \cite{lovasz1983submodular}.
While we do not have a proof that the BPE problem specifically is NP-hard, it does not seem likely that we could find an efficient algorithm for the problem.
Regarding the runtime, our implementation of greedy BPE runs nearly linearly in the length of the string which would be hard to improve unless we plan to not consider the entire string.

\section{Formalizing Byte-Pair Encoding}

We first provide a brief intuition for the BPE training problem and the
greedy algorithm that is typically employed to solve it.
Then, we will develop a formalization of BPE using the tools of combinatorial optimization, rather than as a procedure.\footnote{Note that we are interested in the optimality of the algorithm for creating the subword vocabulary in terms of compression and not the optimality of the encoding in terms of coding-theoretic metrics such as \emph{efficiency}. This aspect of BPE is explored by \citet{tokenization_noiseless}.}

\subsection{A Worked Example}
\begin{table}[htbp]
\vspace{2mm}
\centering
\resizebox{\linewidth}{!}{
\begin{tabular}{>{\small\bf}cl}
\toprule
{merge 1} & \texttt{\mergeemph{p i} c k e d \, \mergeemph{p i} c k l e d \, \mergeemph{p i} c k l e s} \\
{merge 2} & \texttt{pi \mergeemph{c k} e d \hspace{3.5mm} pi \mergeemph{c k} l e d \hspace{2.5mm} pi \mergeemph{c k} l e s} \\
{merge 3} & \texttt{\mergeemph{pi ck} e d \hspace{4.1mm} \mergeemph{pi ck} l e d \hspace{4.3mm} \mergeemph{pi ck} l e s} \\
{merge 4} & \texttt{pick \mergeemph{e d} \hspace{7.0mm} pick l \mergeemph{e d} \hspace{6.2mm} pick l e s} \\
{merge 5} & \texttt{pick ed \hspace{10.0mm} \mergeemph{pick l} ed \hspace{8.3mm} \mergeemph{pick l} e s} \\
{final} & \texttt{pick ed \hspace{10.9mm} pickl ed \hspace{12.2mm} pickl e s} \\
\bottomrule
\end{tabular}
}
\captionof{myexamplef}{Compression of the text \textit{picked pickled pickles} using 5 greedy merges according to the greedy BPE algorithm. The most frequently occurring pair of vocabulary items is highlighted and subsequently merged.
The merge sequence is $\langle$\textit{\mop{p}{i}, \mop{c}{k}, \mop{pi}{ck}, \mop{e}{d}, \mop{pick}{l}}$\rangle$ (notation simplified for clarity).}
\label{tab:micro_bpe_example}
\end{table}

Consider the string in \Cref{tab:micro_bpe_example}: \textit{picked pickled pickles}. 
We wish to create a compact representation of this string, where compactness is quantified in terms of the number of symbols (i.e., vocabulary units) required to precisely encode the string. 
The free parameter is the vocabulary that we will use to construct this representation, albeit the total size of the chosen vocabulary is often a constraint.\footnote{We require a unique encoding for each item, which implies that encoding size will be dependent on the total number of vocabulary items (e.g. the dimension of a one-hot encoding or the number of bits required to encode the text).\looseness=-1}
In our example, let's assume we are allowed a maximum number of 13 symbols in the vocabulary\footnote{Typically, all the symbols in $\alphabet$ are part of the vocabulary so that all texts can be represented, even with lower efficiency.} with which we can encode our string. 
The question is: ``How can we select these symbols to achieve our goal of compactness under this constraint?''

Let us first consider the simple choice of using all the characters present in the string as our vocabulary: This scheme leads to a representation with a length of 22 units, including spaces.
In order to decrease this length (while retaining all information present in the original string), we would need to add an additional symbol to our vocabulary: one with which we can replace co-occurrences of two symbols. 
But how should we choose this entry? 
One strategy---the one employed by the BPE algorithm---is to use the concatenation of the adjacent units $a\,\,b$ that occur with the highest frequency in our string; all occurrences of these adjacent units could then be replaced with a single new unit $ab$. 
We refer to this as a \defn{merge}, which we later define and denote formally as $\mop{a}{b}$.
In Example~\ref{tab:micro_bpe_example}, the first merge is $\mop{p}{i}$, and leads to a representation of length 19 with vocabulary size of 9+1.
We can iteratively repeat the same process; the application of 5 total merges results in the vocabulary units \textit{pick}, \textit{pickl}, \textit{ed}, \textit{e}, and \textit{s}.
These \defn{subwords}\footnote{The term \defn{subword} corresponds to a merge yield (\Cref{def:yield}). We use `subword' and `merge' interchangeably.} allow us to represent our original string using just 9+1 symbols.
If we continued merging, the text representation would become shorter (in terms of number of symbols required to create the representation) but the merge count (and vocabulary size) would grow. 
Therefore, the number of merges $M$, or also the merge count, is a hyperparameter to the whole procedure.
The procedure outlined above is exactly the greedy algorithm for BPE proposed by \citet{gage1994new}.
We provide a minimal implementation in Python in \Cref{code:minimal_bpe}. 

We will define the compression gain of a merge at any given step of the algorithm, corresponding to the number of occurrences where a merge can be applied.
The compression gain of a merge does not always correspond to the frequency of adjacent merge components in that string, due to possible overlaps.
Consider, for instance, the string \textit{aaa} and the merge $\mop{a}{a}$.
The frequency of $aa$ is 2, but the merge can be applied only once ($\mopt{a}{a}a$).
While \citet{gage1994new} and \citet{sennrich-etal-2016-neural} admit overlapping pair counts, \citet{kudo2018sentencepiece}'s popular implementation adjusts the algorithm to disregard the overlaps.
We stick to the latter, which is more suitable from the optimization standpoint adopted here.


\begin{figure}[htbp]
\centering
\begin{minted}[fontsize={\fontsize{9.5}{9}\selectfont},linenos,xleftmargin=7mm]{python}
from collections import Counter
from typing import Union, Tuple, List

def bpe(xs: Union[str, List], V: int):
  for _ in range(V):
    pairs = Counter(zip(xs, xs[1:]))
    top_pair = pairs.most_common(1)[0][0]
    xs = merge(list(xs), top_pair)
  return xs

def merge(xs: List, pair: Tuple):
  ys = []
  while xs:
    if tuple(xs[:2]) == pair:
      ys.append(pair)
      xs = xs[2:]
    else:
      ys.append(xs.pop(0))
  return ys 
\end{minted}
\captionof{mycode}{A minimal implementation of \citeposs{sennrich-etal-2016-neural} greedy algorithm for BPE in Python. See \Cref{code:fixed_bpe} for a version with overlap-adjusted counts.}
\label{code:minimal_bpe}
\end{figure}

\subsection{Merges}\label{sec:merges}
The fundamental building block of the BPE problem is a merge, which we define formally below.
Informally, a merge is the action of creating a new symbol out of two existing ones.
Out of convention, we also refer to the resulting object as a merge.

\begin{definition}
\label{definition:merge}
Let $\alphabet$ be an alphabet, a finite, non-empty set.
The set of all \defn{merges} over $\alphabet$ is the smallest set of pairs $\mergealphabet$ with the following closure property:
\begin{itemize}[noitemsep,topsep=0mm]
\item $\sigma \in \alphabet \Longrightarrow \sigma \in \mergealphabet$ (called \defn{trivial merges});
\item $\merge', \merge'' \in \mergealphabet \Longrightarrow \mop{\merge'}{\merge''} \in \mergealphabet$
\end{itemize}
where we denote the non-trivial elements of $\mergealphabet$ as $\merge=\mop{\merge'}{\merge''}$.
A \defn{merge sequence} is a sequence of merges, which we denote $\merges = \langle \merge_1, \ldots, \merge_N \rangle \in \mergealphabet^*$.\footnote{$(\cdot)^*$ is the Kleene closure.}\looseness=-1
\end{definition}
\noindent It is perhaps easiest to understand the concept of a merge through an example.\looseness=-1
\begin{myexample}\label{ex:basic-merges}
Given the alphabet $\alphabet = \{a, b, c\}$, the following are some of the elements of $\mergealphabet$\textup{:} $\mop{a}{b}$, $\mop{a}{\mop{a}{b}}$, and $\mop{\mop{a}{b}}{\mop{a}{c}}$.
We obtain a merge sequence by arranging these merges into an ordering $\merges = \langle \mop{a}{b}, \mop{a}{\mop{a}{b}}, \mop{\mop{a}{b}}{\mop{a}{c}} \rangle \in \mergealphabet^*$.\looseness=-1
\end{myexample}

Note that the strings corresponding to the merges in a merge sequence---along with the characters that make up the set of trivial merges---determine a \defn{vocabulary}, to be used in downstream applications.\footnote{I.e., the size of the vocabulary is $|\merges| + |\alphabet|$.}
The greedy BPE algorithm constructs a merge sequence iteratively by picking each merge as the pairing of neighbouring symbols in the current sequence of symbols that is being processed.
For instance, the sequence $\merges$ in \Cref{ex:basic-merges} is not valid since it does not contain the merge $\mop{a}{c}$ before the third element $\mop{\mop{a}{b}}{\mop{a}{c}}$.

\begin{definition}\label{def:valid}
    We define a merge sequence $\merges = \langle \merge_1, \ldots, \merge_N \rangle \in \mergealphabet^*$ to be \defn{valid} if, for every $ \merge_n $, it holds that $ \merge_n = \mop{\merge'}{\merge''}$, where for $\merge \in \{\merge',\merge''\}$, $\merge = \merge_k$ with $k<n$, or $\merge \in \alphabet$. We denote the set of valid merge sequences $\validmerges$.
\end{definition}

Note that $\validmerges$ is closed under concatenation, i.e., for two valid merge sequences $\merges', \merges'' \in \validmerges$, we have that
$\merges'\merges'' \in \validmerges$,\footnote{The merge sequence can contain the same merges multiple times and still be valid. Only the later occurrences of the merge will not reduce the representation size.}
where we use $\merges\merges'$ to denote the sequence concatenation of $\merges$ and $\merges'$.

\begin{figure}[htbp]
\centering
\begin{tikzpicture}[
    scale=0.75,
    char/.style={
        text height={height("a")+0pt},
        text width={width("a")+0pt},
        inner sep=1mm,
        align=center,
        font=\fontsize{11}{0}\selectfont\itshape,
    },
    subword/.style={
        align=center,
        font=\fontsize{11}{0}\selectfont,
    },
    merge/.style={
        draw,
        circle,
        align=center,
        line width=0.35mm,
        inner sep=0.7mm,
    },
    connector/.style={
        line width=0.4mm
    }
] 
\node[char] at (0, 0) (r0c0) {a};
\node[char] at (1, 0) (r0c1) {b};
\node[char] at (2, 0) (r0c2) {a};
\node[char] at (3, 0) (r0c3) {a};
\node[char] at (4, 0) (r0c4) {b};
\node[char] at (5, 0) (r0c5) {a};
\node[char] at (6, 0) (r0c6) {c};
\node[char] at (7, 0) (r0c7) {b};
\node[char] at (8, 0) (r0c8) {c};
\node[char] at (9, 0) (r0c9) {b};

\node[merge] at (0.5, 1.1) (r1c0) {$\merge_1$};
\node[merge] at (3.5, 1.1) (r1c1) {$\merge_1$};
\node[merge] at (6.5, 1.1) (r1c2) {$\merge_2$};
\node[merge] at (8.5, 1.1) (r1c3) {$\merge_2$};
\node[merge] at (1.25, 2.2) (r2c0) {$\merge_3$};
\node[merge] at (4.25, 2.2) (r2c1) {$\merge_3$};
\node[merge] at (5.375, 3.3) (r3c0) {$\merge_4$};

\draw[connector] (r1c0) -- (r0c0);
\draw[connector] (r1c0) -- (r0c1);
\draw[connector] (r2c0) -- (r1c0);
\draw[connector] (r2c0) -- (r0c2);

\draw[connector] (r1c1) -- (r0c3);
\draw[connector] (r1c1) -- (r0c4);
\draw[connector] (r1c2) -- (r0c6);
\draw[connector] (r1c2) -- (r0c7);
\draw[connector] (r2c1) -- (r1c1);
\draw[connector] (r2c1) -- (r0c5);
\draw[connector] (r3c0) -- (r2c1);
\draw[connector] (r3c0) -- (r1c2);

\draw[connector] (r1c3) -- (r0c8);
\draw[connector] (r1c3) -- (r0c9);

\node[subword] at (1, -0.7) {$\mopt{\mopt{a}{b}}{a}$};
\node[subword] at (5, -0.7) {$\mop{\mopt{\mopt{a}{b}}{a}}{\mopt{c}{b}}$};
\node[subword] at (8.5, -0.7) {$\mopt{c}{b}$};
\end{tikzpicture}

\vspace{-2mm}
\caption{Application of the merge sequence $\merges = \langle \mop{a}{b}, \mop{c}{b}, \mop{\mop{a}{b}}{a}, \mop{\mop{\mop{a}{b}}{a}}{\mop{c}{b}}\rangle$ on the string $\str = abaabacbcb$. The result can be represented as an ordered forest. Each tree is associated with a subword in the text: $aba$, $abacb$, and $cb$.}
\label{fig:merge_tree_example}
\end{figure}

\paragraph{Applying Merge Sequences.}

Given some string $\str\in\alphabet^*$, we can derive the representation of that string according to the merge sequence $\merges = \langle \merge_1, \ldots, \merge_N\rangle$ by iteratively \defn{applying} each merge $\merge_n$. Note that by the definition of $\mergealphabet^*$, we can trivially lift a string $\str = \langle\sigma_1, \sigma_2, \ldots\rangle$ to a merge sequence by treating each of its characters $\sigma_i\in\alphabet$ as merges. Thus, we  define this procedure more generally in terms of some arbitrary $\overline\merges \in \mergealphabet^*$.  Concretely, we denote the application of a merge $\merge_n$  to $\overline\merges$  as $\enc_{\merge_n}(\overline\merges)$. 
As suggested by \cref{code:minimal_bpe} (line 11), this action consists of replacing all $\overline\merge_k, \overline\merge_{k+1}$ in $\overline{\merges}$ such that $(\overline{\merge}_{k},\overline{\merge}_{k+1})=\merge_n$ by $\merge_n$ itself, in a left-to-right fashion. 
We thus obtain a new $\overline{\merge} \in \mergealphabet^*$, to which a new single merge can be applied.
We lift $\enc$ to a merge sequence $\merges$  by simply repeating the application of $\enc$ on $\overline{\merges}^{(n)}$ for the successive $\merge_n$ in $\merges$; accordingly, we denote this procedure as $\enc_{\merges}(\overline{\merges})$. 
As a result, we obtain $\overline{\merges}^{(|\merges|)}$, which is a non-overlapping ordered forest, i.e., a partial bracketing of the original string $\str$.
We provide an example in \Cref{fig:merge_tree_example}.
Note that the application of the merge sequence is deterministic.\looseness=-1


\paragraph{String Yields.}
We now define a conceptually reverse operation to applying merges, i.e., deriving a string from structured $\overline{\merges}^{(n)}$.
\begin{definition}
\label{def:yield}
The \defn{yield} of a single $\overline\merge \in \mergealphabet$, denoted as $\yield(\overline\merge)$, is defined recursively:
\begin{align}
\hspace{-1mm}\yield(\overline\merge) =
\begin{cases}
\yield(\overline\merge') \yield(\overline\merge'') & \hspace{-3mm} \textbf{if }\overline \merge=\mop{\overline\merge'}{\overline\merge''} \\
\merge & \hspace{-3mm} \textbf{if } \overline\merge \in \alphabet
\end{cases}
\end{align}
\end{definition}
\noindent As an example, $\yield(\mop{\mop{a}{a}}{\mop{\mop{c}{b}}{c}})$ is $aacbc$. For a given $\overline\merges$, $\yield$ is applied sequentially. The resulting characters can then be concatenated to derive a single string. The yield operation can also be used to derive vocabulary units---often referred to as subwords; explicitly, the yields of individual merges in a sequence $\merges$ can be used to form a vocabulary.

Strictly speaking, in \citeposs{sennrich-etal-2016-neural} implementation of BPE, the elements of the merge sequences $ \merges $ are not of the form $ \merge_n = \mop{\merge'}{\merge''} \in \mergealphabet $, but rather $ \merge_n = \mop{\yield(\merge')}{\yield(\merge'')} \in \alphabet^*\times\alphabet^* $, i.e., rather than consisting of prior merges as in our formalization, the merges of \citeposs{sennrich-etal-2016-neural} consist of the yields of those merges.
This introduces an ambiguity with respect to our formalization since: for a given merge sequence in that implementation, more than one sequence $ \merges \in \mergealphabet^* $ could correspond, some of which would not be valid.
As an example, consider the sequence $ \langle \mop{a}{b}, \mop{ab}{c}, \mop{abc}{d} \rangle $ which could correspond to either $ \langle \mop{a}{b}, \mop{\mop{a}{b}}{c}, \mop{\mop{\mop{a}{b}}{c}}{d} \rangle $ or $ \langle \mop{a}{b}, \mop{\mop{a}{b}}{c}, \mop{\mop{a}{\mop{b}{c}}}{d} \rangle $, the last of which is invalid.
However, it turns out that this is not an issue for us: by construction, the successive elements of the sequence are determined by the previous ones (cf. \Cref*{algo:iterative_greedy_bpe_slow}), which means that, in fact there is no ambiguity, and the merge sequences in \citeposs{sennrich-etal-2016-neural} implementation always correspond to what our formalization defines as a valid merge sequence.




\subsection{The BPE Training Optimization Problem}
We now define the BPE training task as a combinatorial optimization problem.
The objective we seek to optimize is the compression utility of the chosen merge sequence (taken with respect to a string), which we define below.\looseness=-1
\begin{definition}
\label{defn:compression}
Let $\str \in \alphabet^*$ be a string.
We define the \defn{compression utility} of a valid merge sequence $\merges$ applied to $\str$ as the following function:
\begin{equation}
\kappa_{\str}(\merges) = |\str| - |\enc_{\merges}(\str)|
\end{equation}
Note that for any merge sequence $\merges$, $\kappa_{\str}(\merges) \geq 0$ and we take $\kappa_{\str}(\langle \rangle) = 0$.
Then, for any merge sequence $\merges' = \langle \merge'_1, \ldots, \merge'_{|\str|-1} \rangle$ of length $|\str|-1$  where every merge produces replacements, we have $\kappa_{\str}(\merges') = |\str|-1$ (see proof of \Cref{thm:faster_runtime}). 
\end{definition}

We can further define the \defn{compression gain} of two merge sequences with respect to each other.
\begin{definition}
\label{def:compression_gain_utility}
The \defn{compression gain} of $\merges'$ with respect to a sequence $\merges$, denoted as $\kappa_{\str}(\merges' \mid \merges)$, is defined as
\begin{align}\kappa_{\str}(\merges  \merges') - \kappa_{\str}(\merges).
\end{align}
Similarly, the compression gain of a single merge $\merge$ with respect to a sequence $\merges$, denoted as $\kappa_{\str}(\merge \mid \merges)$, is defined as $\kappa_{\str}(\merges  \merge) - \kappa_{\str}(\merges)$.
\end{definition}

We use the compression gain to later make a sequence of observations which leads to proving the function submodularity and eventually its approximation bound of the BPE training algorithm.\looseness=-1


\begin{algorithm}[t]
{\fontsize{11}{12}\selectfont
\begin{algorithmic}[1]
\State $\merges \gets \langle \rangle$
\For{$i \text{ in } \{0,\ldots,M\}$}
    \State
        $\merge \gets \displaystyle\argmax_{(\merge', \merge'') \in \text{set}(\str)^2} 
        \Call{PairFreq}{\str, (\merge',\merge'')}$
        \label{algo:iterative_greedy_bpe_slow_line_pairs}
    \State
        $\str \gets \Call{Apply}{\merge,\str}$
        \label{algo:iterative_greedy_bpe_slow_line_merge}
    \State $\merges \gets \merges \circ \langle \merge \rangle$
\EndFor
\State \Return $\merges, \str$
\end{algorithmic}
}
\caption{%
    Iterative Greedy BPE (slow).\newline
    \textbf{Inputs}: sequence $\str$, merge count $M$\newline
    \textbf{Output}: merge sequence $\merges$, tokenized sequence $\str$ \newline
    \textsc{PairFreq} are non-overlapping pair frequencies
}
\label{algo:iterative_greedy_bpe_slow}
\end{algorithm}
Now, armed with \cref{defn:compression}, we can formally
state our optimization problem.
In words, \textit{we seek to find a valid merge sequence $\merges$ with length of $M$
that maximizes the compression utility $\kappa_{\str}(\cdot)$ for a pre-specified string $\str \in \alphabet^*$}. 
We write this combinatorial optimization problem more formally as follows:\footnote{In practice, it does not happen that $|\str| < M$ and so we use $|\mergesopt| = M$ for convenience instead of $|\mergesopt| \leq M$.}
\begin{equation}\label{eq:optimization-objective}
\mergesopt = \argmax_{\substack{\merges \in \validmerges\\ |\merges| = M}} \kappa_{\str}(\merges)
\end{equation}
The most common procedure found in the NLP literature for solving \cref{eq:optimization-objective} is a greedy algorithm \cite{gage1994new,sennrich-etal-2016-neural}.
The implementation of \citeposs{gage1994new} algorithm presented by \citet{sennrich-etal-2016-neural} runs in $\bigO{N M}$ time ($N = |\str|, M = |\mergesopt|$). 
We describe this greedy algorithm in detail in \Cref{sec:greedy} and provide a novel theoretical result: \emph{The algorithm comes with a $\frac{1}{\sigmaopt}(1 - e^{-\sigmaopt})$ bound on its approximation error of \cref{eq:optimization-objective}.}
In \Cref{sec:speed-up}, we further offer an asymptotic speed-up to \citeposs{sennrich-etal-2016-neural} algorithm, reducing its runtime to $\bigO{N \log M}.$
Finally, for completeness, we offer an exact program for finding an optimal valid merge sequence in \Cref{sec:dynamic-program}.
While this algorithm runs in exponential time, which prevents it to be used in real applications, it is still faster than the brute-force counterpart.


\section{A Greedy Approximation of BPE}
\label{sec:greedy}

We demonstrate that, for any string $\str \in \alphabet^*$, the following bound holds
\begin{equation}
\frac{\kappa_{\str}(\mergesgreedy)}{\kappa_{\str}(\mergesopt)} \geq \frac{1}{\sigmaopt}(1 - e^{-\sigmaopt})
\end{equation}
 where, as in the previous section,  $\mergesgreedy$ is the valid merge sequence output by the greedy algorithm and $\mergesopt$ is an optimal valid merge sequence. 
To prove this bound, we rely heavily on the theory of submodularity \citep{krause14survey,bilmes2022submodularity}.\looseness=-1

\subsection{Properties of Compression Utility ($\kappa$)}
We start by proving some useful facts about the compression utility function $\kappa_{\str}$.
Specifically, we first show that $\kappa_{\str}$ is a specific type of monotone non-decreasing submodular sequence function, which we make precise in the following definitions.\looseness=-1
\begin{definition}\label{def:monotone}
A real-valued function $f$ over valid merge sequences is \defn{monotone non-decreasing} if, for all $\merges \in \validmerges$ and for all $n \in \naturals$, it holds that $f(\merges_{< n}) \geq f(\merges_{<n-1})$, where $\merges_{< n} \defeq \langle\merge_1, \ldots, \merge_{n-1}\rangle$.\looseness=-1
\end{definition}

\begin{proposition}
Let $\kappa_{\str}$ be the compression utility function.
Then, for a fixed $\str \in \alphabet^*$, $\kappa_{\str}(\cdot)$ is monotone (\Cref{def:monotone}).
\end{proposition}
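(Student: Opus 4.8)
The plan is to show that appending one more merge to any valid prefix never decreases the compression utility, which is exactly what \Cref{def:monotone} demands. Concretely, I would fix a string $\str$ and a valid merge sequence $\merges$, and argue that $\kappa_{\str}(\merges_{<n}) \geq \kappa_{\str}(\merges_{<n-1})$ for every $n$. Unwinding \Cref{defn:compression}, this inequality is equivalent to $|\enc_{\merges_{<n-1}}(\str)| \geq |\enc_{\merges_{<n}}(\str)|$, i.e., applying one additional merge $\merge_{n-1}$ to the already-processed sequence $\overline{\merges} \defeq \enc_{\merges_{<n-1}}(\str)$ cannot increase the number of symbols. So the whole proposition reduces to a local claim about a single merge.

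\textbf{The core local claim.} I would prove that for any $\overline{\merges} \in \mergealphabet^*$ and any single merge $\merge$, we have $|\enc_{\merge}(\overline{\merges})| \leq |\overline{\merges}|$. This follows directly from the definition of $\enc$ (as described after \Cref{def:valid} and in \cref{code:minimal_bpe}): applying $\merge = \mop{\merge'}{\merge''}$ scans $\overline{\merges}$ left to right and replaces each adjacent pair $(\overline{\merge}_k, \overline{\merge}_{k+1}) = \merge$ by the single symbol $\merge$. Each such replacement removes two symbols and inserts one, decreasing the length by exactly one, while positions not participating in a replacement are copied verbatim. Hence, if $\merge$ produces $r \geq 0$ replacements, then $|\enc_{\merge}(\overline{\merges})| = |\overline{\merges}| - r \leq |\overline{\merges}|$. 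Since $r \geq 0$ always, the length is non-increasing, which is precisely the local claim.

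\textbf{Assembling the argument.} Chaining the local claim gives the result: writing $\overline{\merges} = \enc_{\merges_{<n-1}}(\str)$ and noting that $\enc_{\merges_{<n}}(\str) = \enc_{\merge_{n-1}}(\overline{\merges})$ by the sequential definition of $\enc$ on merge sequences, we get $|\enc_{\merges_{<n}}(\str)| = |\overline{\merges}| - r \leq |\overline{\merges}| = |\enc_{\merges_{<n-1}}(\str)|$. Substituting back into $\kappa_{\str}$ yields
\begin{equation}
\kappa_{\str}(\merges_{<n}) - \kappa_{\str}(\merges_{<n-1}) = |\enc_{\merges_{<n-1}}(\str)| - |\enc_{\merges_{<n}}(\str)| = r \geq 0,
\end{equation}
establishing monotonicity. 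The validity of $\merges$ is what guarantees $\merge_{n-1}$ is a well-defined merge over the current alphabet of symbols, so $\enc_{\merge_{n-1}}(\overline{\merges})$ is meaningful; I would flag this as the only place where the $\merges \in \validmerges$ hypothesis is used.

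I do not expect any genuine obstacle here, since the statement is essentially definitional once one expands $\kappa_{\str}$. The only subtlety worth being careful about is the bookkeeping for overlapping occurrences: because $\enc$ operates left-to-right and consumes both symbols of a matched pair before continuing (as in \cref{code:minimal_bpe}, where \texttt{xs = xs[2:]} after a match), overlapping candidate pairs such as in \emph{aaa} cannot both be counted, so $r$ is the number of \emph{non-overlapping} applications. This does not affect the inequality $r \geq 0$, but I would state it explicitly to keep the later (less trivial) submodularity arguments consistent with the overlap-adjusted counting convention the paper adopts.
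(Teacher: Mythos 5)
Your proof is correct and follows essentially the same route as the paper's: the paper telescopes $\kappa_{\str}(\merges_{<n})$ as $\kappa_{\str}(\merges_{<n-1})$ plus a single-merge compression gain and observes that gain is non-negative, which is exactly your local claim that one application of $\enc_{\merge}$ removes $r \geq 0$ symbols. You merely spell out the justification for the non-negativity (and the overlap bookkeeping) that the paper leaves implicit.
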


\begin{proof}
For all $n \in \naturals$, we have that $\kappa_{\str}(\merges_{<n}) =\allowbreak \kappa_{\str}(\merges_{<n-1}) + \underbrace{\kappa_{\str}(\merge_n \mid \merges_{<n-1})}_{\geq 0}$.
It follows that $\kappa_{\str}(\cdot)$ is monotone non-decreasing.\looseness=-1
\end{proof}
Next, we turn to a definition of sequence submodularity from \citet{alaei2010maximizing}.
In contrast to \citeposs{alaei2010maximizing} definition, we add the additional constraint that a merge-sequence function must take a \emph{valid} merge sequence as an argument.
\begin{definition}\label{def:submodular}
A real-valued function $f$ over valid merge sequences is \defn{submodular} if, for all  $\merges, \merges' \in \validmerges$ such that $\merges' \preccurlyeq \merges$,\footnote{I.e., we have that $\merges'$ is a prefix of $\merges$.} and for all $\nu \in \mergealphabet$ such that both $\merges' \nu$ and $\merges \nu$ are valid, we have
\begin{equation}
\label{eq:function_submodularity}
f(\nu \mid \merges') \geq f(\nu \mid \merges).
\end{equation}
\end{definition}

\begin{restatable}{proposition}{compressionutilitysubmodular}
Let $\kappa_{\str}$ be the compression utility function.
Then, for a fixed $\str \in \alphabet^*$, $\kappa_{\str}(\cdot)$ is submodular (\Cref{def:submodular}) when the domain is restricted to the set of valid merges $\validmerges$.\looseness=-1
\end{restatable}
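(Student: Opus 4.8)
The plan is to reduce submodularity to a purely combinatorial statement about how often a merge can fire, and then to show that this count can only shrink as the prefix grows. First I would rewrite the marginal gain: by \Cref{def:compression_gain_utility} and \Cref{defn:compression},
\[
\kappa_{\str}(\nu \mid \merges) = |\enc_{\merges}(\str)| - |\enc_{\merges\nu}(\str)|,
\]
and since applying a single merge $\nu = \mop{\nu'}{\nu''}$ replaces each matched adjacent pair by one symbol, every application shortens the representation by exactly one. Hence $\kappa_{\str}(\nu \mid \merges)$ equals the number of non-overlapping, left-to-right occurrences of the adjacency $\mop{\nu'}{\nu''}$ among the roots of the forest $\enc_{\merges}(\str)$. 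Writing $\merges = \merges'\merges''$ (valid, since $\merges' \preccurlyeq \merges$), the claim \cref{eq:function_submodularity} becomes the assertion that this occurrence count does not increase when we pass from $\enc_{\merges'}(\str)$ to the coarser forest $\enc_{\merges''}(\enc_{\merges'}(\str)) = \enc_{\merges}(\str)$.

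Next I would build an injection from the occurrences of $\nu$ after $\merges$ into the occurrences after $\merges'$. Each occurrence after $\merges$ is a pair of adjacent roots, a $\nu'$-tree spanning some $[i,m]$ of $\str$ and a $\nu''$-tree spanning $[m{+}1,j]$; I map it to this span $[i,j]$. Distinct non-overlapping occurrences cover disjoint spans, so the map is injective, and it suffices to show the same span already witnesses an occurrence of $\nu$ in $\enc_{\merges'}(\str)$, i.e.\ that after the prefix $\merges'$ alone the substring on $[i,m]$ is already a complete $\nu'$-root and the substring on $[m{+}1,j]$ a complete $\nu''$-root, lying adjacently. This is where validity is essential: the hypothesis that $\merges'\nu$ is valid forces each of $\nu',\nu''$ to be a character or to occur in $\merges'$, so every merge needed to assemble the $\nu'$- and $\nu''$-subtrees already appears in the prefix. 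A structural induction on $\nu'$ (and symmetrically $\nu''$) then shows these subtrees are fully formed by the end of $\merges'$, using that $\validmerges$ orders every component before the merge that consumes it. Because the left-to-right non-overlapping rule realizes the \emph{maximum} number of disjoint occurrences, the count after $\merges'$ is at least the size of the image of the injection, which yields the desired inequality.

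The crux, and the step I expect to be the main obstacle, is arguing that $\merges''$ destroys occurrences but never creates new ones, so that the injection lands in genuine pre-existing occurrences and the count cannot go up. The key observation is that applying a merge only coarsens the forest: it turns two adjacent roots into one and never splits a root, so the left-to-right order of the underlying characters and the set of already-formed $\nu'$- and $\nu''$-roots can only shrink. In particular, no new $\nu'$-root can appear after $\nu'$ has been applied in $\merges'$, since manufacturing a fresh $\nu'$ would require two of its component roots to become newly adjacent, which in turn would require every root strictly between them to vanish, impossible, as merging intermediate roots always leaves at least one root between the two endpoints or else alters an endpoint root itself. The delicate points to handle carefully are (i) the possibility that $\nu$, or its components, recur in $\merges''$ (repeated merges are permitted by \Cref{def:valid}), and (ii) the bookkeeping for self-overlapping patterns such as $\mop{\sigma}{\sigma}$; both are controlled by the same adjacency argument, which I would phrase as an invariant maintained across each single application in $\merges''$. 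Combining the invariant with the injection and the reformulated gain gives $\kappa_{\str}(\nu \mid \merges') \geq \kappa_{\str}(\nu \mid \merges)$, which is exactly \cref{eq:function_submodularity}.
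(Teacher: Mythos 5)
Your proposal is correct and follows essentially the same route as the paper's proof: both hinge on the facts that validity of $\merges'\nu$ forces $\nu'$ and $\nu''$ to be already formed (exhaustively) by the end of the prefix $\merges'$, and that subsequent merges can only coarsen the forest, so adjacent $\nu'$--$\nu''$ occurrences can never be created, only destroyed. Your span-based injection and the explicit coarsening invariant are a more rigorous rendering of the paper's terser ``a fortiori'' argument, but the underlying idea is identical.
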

\begin{proof}
Let $\merges, \merges' \in \validmerges$ such that $\merges' \preccurlyeq \merges$, and let $\nu = [\nu',\nu'']$ be any merge such that $\merges\nu, \merges'\nu \in \validmerges.$
First, notice that, once a merge $\merge_n$ in a merge sequence $\merges$ is applied, the number of occurrences of $\merge_n$ in $\kappa_{\str}(\merges_{\leq n})$ cannot be increased by any sequence of further applications,
because all submerges of $\merge_n$ where applied exhaustively (i.e., to all consecutive occurrences of their immediate submerges).
Now, from $\merges' \nu \in \validmerges$, it follows that both $\nu'$ and $\nu''$ are in $\merges'$.
Therefore, the number of occurrences $\nu'$ and $\nu''$, and \emph{a fortiori} of successive occurrences of them, cannot be greater in $\kappa_{\str}(\merges)$ than in $\kappa_{\str}(\merges')$,
and hence $\kappa_{\str}(\nu \mid \merges) \leq \kappa_{\str}(\nu \mid \merges')$,
which proves the submodularity of $\kappa_{\str}$ over $\validmerges$.
\end{proof}

In the context of compression, the submodularity property means, that the compression gain achieved after adding a specific merge to a merge sequence can never increase with merge sequence length.
However, the requirement that the added merge does not create an invalid merge sequence is important.
We highlight this importance in the following example.


\begin{myexample}
Consider $\alphabet = \{a, b, c, d, e\}$, the string $\str=aabcde$, and the valid merge sequences $\merges' = \langle \mop{a}{a} \rangle$ and $\merges = \langle \mop{a}{a}, \mop{c}{d} \rangle$.
Note that $\merges' \preccurlyeq \merges$.
These merge sequences have compression utilities $\kappa_{\str}(\merges') = 6 - 5 = 1$ and $\kappa_{\str}(\merges) = 6 - 4 = 2$, respectively.
Next, consider the merge sequence $\vnu = \langle \mop{b}{\mop{c}{d}}, \mop{\mop{b}{\mop{c}{d}}}{e}\rangle$.
Now, $\kappa_{\str}(\vnu \mid \merges') = 0$ and $\kappa_{\str}(\vnu \mid \merges) = 2$, which violates submodularity because $\merge' \preccurlyeq \merges$.
What went wrong?
The problem is that $\merges \vnu$ is not a \emph{valid} merge sequence.\looseness=-1
\end{myexample}

In order to formally prove our desired guarantee regarding the approximation bound of the greedy BPE algorithm, it is not enough that compression utility is sequence submodular over valid merge sequences.
For this reason, we identified another property of the compression utility function that allows us to push through our result.\looseness=-1

\begin{definition}\label{defn:subset-parital-order}
We define the following \defn{partial order on merges}.
For merges $\merge, \merge' \in \mergealphabet$, we say $\merge' \subset \merge$ iff 
$\merge'$ is a submerge of $\merge$.
The merge $\merge'$ is a \defn{submerge} of $\merge = \mop{\merge_1}{\merge_2}$ iff:
\begin{itemize}[noitemsep,topsep=0mm]
\item $\merge_1 = \merge'$,  or, $\merge_2 = \merge'$, or
\item $\merge' \subset \merge_1$, or $\merge' \subset \merge_2$.
\end{itemize}
\end{definition}

\begin{restatable}{definition}{hierarchicalsub}\label{def:hierarchical-submodular}
A real-valued function over valid merge sequences is \defn{hierachically sequence submodular} if, for every valid merge sequence of the form $\merges'\nu'\merges\nu$
where $\nu' \subset \nu$ according to the partial order given in \Cref{defn:subset-parital-order}, we have that
\begin{equation}\label{eq:hierarchical-submodularity}
f(\nu'\mid \merges'  ) \geq f(\nu \mid \merges'  \nu'  \merges).
\end{equation}
\end{restatable}
Note that hierarchical sequence submodularity is a different concept from function modularity, described in \Cref{def:submodular}.
Indeed, in the case of functions over valid merge sequences, neither submodularity nor hierarchical sequence submodularity implies the other.
To see this, note that roughly speaking, submodularity describes the difference in the value of a function when the same element is given as an argument, albeit conditioned on the presence of two different (but related) other arguments.
However, if the same argument is considered in \Cref{eq:hierarchical-submodularity}, we have
\begin{equation}
\kappa_{\str}(\nu' \mid \merges') \geq \kappa_{\str}(\nu' \mid \merges' \nu' \merges) = 0,
\end{equation}
which is a trivial bound due to the non-negativity of $\kappa_{\str}(\cdot)$.
The naming is inspired by the fact we require the partial order over merges, which creates the hierarchy. 


\begin{proposition}\label{prop:hierarchical-submodular}
Let $\kappa_{\str}$ be the compression utility function.
Then, for a fixed $\str \in \alphabet^*$, $\kappa_{\str}(\cdot)$ is hierarchically submodular (\Cref{def:submodular}) when the domain is restricted to the set of valid merges $\validmerges$.
\end{proposition}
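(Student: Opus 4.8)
The plan is to reduce \Cref{eq:hierarchical-submodularity} to a purely combinatorial counting statement about how often a merge can fire. By \Cref{def:compression_gain_utility}, the left-hand side $\kappa_{\str}(\nu' \mid \merges')$ is the number of times $\nu'$ can be applied to the forest $\enc_{\merges'}(\str)$, i.e., the number of non-overlapping occurrences of the adjacent pair of immediate submerges $\nu'_1, \nu'_2$ of $\nu' = \mop{\nu'_1}{\nu'_2}$; equivalently, it is the number of $\nu'$-rooted trees created when $\nu'$ is applied exhaustively right after $\merges'$. Symmetrically, the right-hand side $\kappa_{\str}(\nu \mid \merges'\nu'\merges)$ is the number of times $\nu$ can be applied to $\enc_{\merges'\nu'\merges}(\str)$. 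So it suffices to show that $\nu$ fires no more often after $\merges'\nu'\merges$ than $\nu'$ fires after $\merges'$.

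First I would track occurrences of $\nu'$ as subtrees of the forest. The exhaustive application of $\nu'$ immediately after $\merges'$ produces exactly $\kappa_{\str}(\nu' \mid \merges')$ nodes whose associated tree is $\nu'$. Each of these nodes survives every later merge in $\merges$ and the final $\nu$, since applying a merge only attaches a new parent on top of existing nodes and never deletes a subtree. Crucially, no merge after this point can manufacture additional $\nu'$-subtrees: this is the monotone non-increase of occurrence counts already invoked in our proof that $\kappa_{\str}$ is submodular (once a merge is applied exhaustively, its number of occurrences cannot be increased by further applications). Hence the forest $\enc_{\merges'\nu'\merges}(\str)$ contains at most $\kappa_{\str}(\nu' \mid \merges')$ subtrees equal to $\nu'$.

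Next I would use the partial order $\nu' \subset \nu$ from \Cref{defn:subset-parital-order} to charge $\nu$-applications against $\nu'$-subtrees. Because $\nu'$ is a submerge of $\nu$, the tree of every node produced by a firing of $\nu$ contains at least one occurrence of $\nu'$ as a subtree; and distinct firings of $\nu$ yield node-disjoint trees, so the $\nu'$-subtrees they contain are pairwise distinct. Therefore each firing of $\nu$ can be injectively assigned a $\nu'$-subtree of $\enc_{\merges'\nu'\merges}(\str)$, which yields
\begin{equation}
\kappa_{\str}(\nu \mid \merges'\nu'\merges) \;\leq\; \#\{\nu'\text{-subtrees in } \enc_{\merges'\nu'\merges}(\str)\} \;\leq\; \kappa_{\str}(\nu' \mid \merges'),
\end{equation}
which is exactly \Cref{eq:hierarchical-submodularity}.

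The main obstacle I anticipate is making the two middle inequalities fully rigorous, and in particular pinning down the supply of $\nu'$-subtrees. The delicate point is that the $\nu'$-subtrees consumed by $\nu$ must be exactly those created by the $\nu'$ appearing explicitly in the decomposition $\merges'\nu'\merges\nu$; this is what lets us identify their count with $\kappa_{\str}(\nu' \mid \merges')$ rather than merely with some larger total number of $\nu'$-nodes that might already be present in the forest. Establishing this cleanly requires the exhaustiveness of merge application together with the validity of $\merges'\nu'\merges\nu$ (so that $\nu$, and with it its submerge $\nu'$, is well-formed at the moment $\nu$ fires), and it is where careful subtree-level bookkeeping, rather than top-level node counting, will be essential.
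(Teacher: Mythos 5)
Your argument is essentially the paper's own proof: both interpret $\kappa_{\str}(\nu \mid \merges'\nu'\merges)$ as the number of firings of $\nu$ and use $\nu' \subset \nu$ to charge each resulting $\nu$-rooted tree injectively against a distinct $\nu'$-subtree, bounding the count by $\kappa_{\str}(\nu' \mid \merges')$; your version just spells out the injection and the survival of $\nu'$-nodes more explicitly. The bookkeeping subtlety you flag at the end (distinguishing the $\nu'$-nodes created by the explicit $\nu'$ in the decomposition from any already present) is real, but the paper's proof glosses over it in exactly the same way, so you are not missing anything relative to the published argument.
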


\begin{proof}
Let $\str \in \alphabet^*$ be a string and $\merges$, $\merges'$ be valid merge sequences.
Furthermore, let $\nu$, $\nu'$ be merges such that $\nu' \subset \nu$ and $\merges' \nu' \merges \nu$ is itself a valid merge sequence.
Combinatorially, $\kappa_{\str}(\nu \mid \merges' \nu' \merges)$ is the number of replacements made in $\str$ by the single merge of $\nu$, after applying $\merges' \nu' \merges$.
However, since $\nu' \subset \nu$,
every new tree in $\str$ resulting from that by applying $\nu$ must have $\nu'$ as a descendant.
Thus, $\kappa_{\str}(\nu' \mid \merges' )$, which is
the number of new nodes in the forest created by applying $\nu'$, must be at least equal to $\kappa_{\str}(\nu \mid \merges' \nu' \merges)$, if not greater.
\end{proof}
\Cref{prop:hierarchical-submodular} gives us a different notion of submodularity, which is important for the proof of the greedy BPE training guarantee.
As an illustrative example of the proposition, we return to \Cref{fig:merge_tree_example}.
In this case, $\merges' = \langle \mop{a}{b} \rangle$, $\nu' = \mop{\mop{a}{b}}{a}$, $\merges = \langle \mop{c}{b} \rangle$, $\nu = \mop{\mop{\mop{a}{b}}{a}}{\mop{c}{b}}$.
Clearly, $\nu' \subset \nu$ and $\nu'$ appears twice, while $\nu$ only once.

Finally, we adapt the definition of total backward curvature from \citep{zhang2015string} to our needs.
Intuitively, the total backward curvature is related to how much the utility of $\merges$ can decrease if $\nu$ is applied before, at the beginning.
\begin{definition}\label{def:total_backward_curvature}
The \defn{total backward curvature} of the compression utility function $\kappa$ with respect to an optimal merge sequence $\mergesopt$ is denoted with $\sigmaopt$:
\begin{align}
\sigmaopt = \max_{\substack{\merges \in \mergealphabet^* \\ |\merges|\leq M}} \left\{ 1-\frac{\kappa(\merges \mergesopt)-\kappa(\mergesopt)}{\kappa(\merges)} \right\}\,.
\end{align}
\end{definition}

\subsection{The Greedy Algorithm for BPE}
In words, the greedy algorithm proceeds as follows:
For each of the $M$ iterations, the algorithm chooses the next merge that is both valid and (locally) maximizes the objective in \cref{eq:optimization-objective}.
We give pseudocode in \Cref{algo:iterative_greedy_bpe_slow}.
In practice, as shown in \Cref{code:minimal_bpe}, this is done by choosing the merge that occurs most frequently (can be adjusted for pair overlaps).
The main loop occurs $M$ times. 
In the subsequent theorem we show the approximation bound for the greedy algorithm.

\begin{restatable}{theorem}{greedyalmostoptimal}
\label{threorem:greedy_almost_optimal}
The greedy algorithm for BPE training, i.e., for learning a length $M$ merge sequence $\mergesgreedy$, is $\big(\frac{1}{\sigmaopt}(1-e^{-\sigmaopt})\big)$-optimal: for every string $\str \in \alphabet^*$
\begin{equation}
\frac{\kappa_{\str}(\mergesgreedy)}{\kappa_{\str}(\mergesopt)} \geq \frac{1}{\sigmaopt}(1 - e^{-\sigmaopt})
\end{equation}
with respect to the optimal length $M$ merge sequence $\mergesopt$.
\end{restatable}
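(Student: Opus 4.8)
The plan is to adapt the classical greedy analysis for monotone submodular maximization under a cardinality constraint \citep{alaei2010maximizing}, refined by the total backward curvature in the style of \citet{zhang2015string}, invoking the three properties of $\kappa_{\str}$ established above: monotonicity, submodularity over $\validmerges$, and hierarchical sequence submodularity. Write $\mergesgreedy = \langle \mergegreedy_1, \dots, \mergegreedy_M\rangle$ and $\mergesopt = \langle \mergeopt_1, \dots, \mergeopt_M\rangle$, let $G_i = \langle \mergegreedy_1, \dots, \mergegreedy_i\rangle$ be the length-$i$ greedy prefix, and set $g_i = \kappa_{\str}(G_i)$, so that $g_0 = 0$ and $g_M = \kappa_{\str}(\mergesgreedy)$. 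The goal is to derive the single-step recursion
\begin{equation}
\kappa_{\str}(\mergesopt) \leq \sigmaopt\, g_i + M\,(g_{i+1} - g_i),
\end{equation}
from which the stated bound follows by solving a linear recurrence.

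The core of the argument is to bound each optimal marginal gain along the concatenation $G_i \mergesopt$ (which is valid, since $\validmerges$ is closed under concatenation) by a single greedy step $g_{i+1}-g_i$. Telescoping gives $\kappa_{\str}(G_i \mergesopt) = g_i + \sum_{j=1}^{M} \kappa_{\str}(\mergeopt_j \mid G_i \mergesopt_{<j})$, so it suffices to show $\kappa_{\str}(\mergeopt_j \mid G_i \mergesopt_{<j}) \leq g_{i+1} - g_i$ for every $j$. If $\mergeopt_j$ is valid to append directly after $G_i$, submodularity followed by the greedy choice rule settles it. The main obstacle is exactly the case where $\mergeopt_j$ is \emph{not} valid after $G_i$: then one of its immediate submerges is an earlier optimal merge $\mergeopt_k$ ($k<j$) absent from $G_i$, so submodularity cannot be invoked directly. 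Here hierarchical sequence submodularity (\Cref{def:hierarchical-submodular}) is the key tool: since $\mergeopt_k \subset \mergeopt_j$ and $\mergeopt_k$ precedes $\mergeopt_j$, it yields $\kappa_{\str}(\mergeopt_j \mid G_i \mergesopt_{<j}) \leq \kappa_{\str}(\mergeopt_k \mid G_i \mergesopt_{<k})$. I would then descend the (strictly decreasing, hence finite) submerge chain, applying this inequality repeatedly until reaching some submerge $\mergeopt_\ell$ that is valid after $G_i$ --- such a merge always exists because depth-one merges $\mop{\sigma'}{\sigma''}$ with $\sigma', \sigma'' \in \alphabet$ are trivially valid --- at which point submodularity and greediness give $\kappa_{\str}(\mergeopt_\ell \mid G_i \mergesopt_{<\ell}) \leq \kappa_{\str}(\mergeopt_\ell \mid G_i) \leq g_{i+1}-g_i$. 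Summing over $j$ yields $\kappa_{\str}(G_i \mergesopt) \leq g_i + M(g_{i+1}-g_i)$.

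To convert this into a bound on $\kappa_{\str}(\mergesopt)$ rather than on $\kappa_{\str}(G_i \mergesopt)$, I would apply the definition of total backward curvature (\Cref{def:total_backward_curvature}) with $\merges = G_i$: rearranging the defining inequality and using $\kappa_{\str}(G_i) \geq 0$ gives $\kappa_{\str}(G_i \mergesopt) - \kappa_{\str}(\mergesopt) \geq (1-\sigmaopt)\,g_i$, i.e. $\kappa_{\str}(\mergesopt) \leq \kappa_{\str}(G_i \mergesopt) - (1-\sigmaopt)\,g_i$. Combining with the previous step produces exactly the target recursion $\kappa_{\str}(\mergesopt) \leq \sigmaopt\, g_i + M(g_{i+1}-g_i)$.

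Finally, rearranging gives $g_{i+1} \geq (1 - \sigmaopt/M)\,g_i + \kappa_{\str}(\mergesopt)/M$; setting $d_i = \kappa_{\str}(\mergesopt)/\sigmaopt - g_i$ turns this into $d_{i+1} \leq (1 - \sigmaopt/M)\, d_i$, so that $d_M \leq (1-\sigmaopt/M)^M d_0$ with $d_0 = \kappa_{\str}(\mergesopt)/\sigmaopt$. Using $g_M = \kappa_{\str}(\mergesopt)/\sigmaopt - d_M$ and the elementary inequality $(1-\sigmaopt/M)^M \leq e^{-\sigmaopt}$ yields $\kappa_{\str}(\mergesgreedy) = g_M \geq \frac{1}{\sigmaopt}(1 - e^{-\sigmaopt})\,\kappa_{\str}(\mergesopt)$, which is the claim. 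The part demanding the most care is the descent argument for invalid optimal merges, since it is precisely where the generic submodular-greedy analysis fails and where hierarchical sequence submodularity must do the work; a secondary point to verify is that $G_i\mergesopt$ and all of its prefixes are valid, so that every application of the submodularity properties and the greedy choice rule is legitimate.
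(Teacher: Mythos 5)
Your proposal is correct and takes essentially the same route as the paper's own proof: the same per-iteration recursion derived from submodularity, the hierarchical-submodularity descent to a submerge that is valid after the greedy prefix, the greedy choice rule, and total backward curvature, all unrolled as a geometric recursion. The only cosmetic difference is that the paper packages the key step as an averaging lemma exhibiting a single witness merge whose gain is at least $\frac{1}{M}\kappa_{\str}(\mergesopt \mid \mergesgreedy_{<i})$, whereas you bound every term of the telescoped sum by the greedy gain $g_{i+1}-g_i$ and then sum over $j$.
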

\begin{proof}
The proof is shown in \Cref{proof:greedy_almost_optimal}.
\end{proof}

\subsection{Measuring Total Backward Curvature}

We do not have a formal bound for $\sigmaopt$ and estimate it by enumerating all strings of maximum length $|\str| \leq 15$ given a finite alphabet $|\alphabet|=5$ and maximum merge sequence size $|\mergesopt| < 5$. 
The found maximum is $\sigmaopthat = 2.5$, from which  follows an optimality bound of $\approx 0.37$. 
When we restrict our search to texts from a natural language (English), we obtain a slightly lower estimate $\sigmaopthat_N = 2.0$ and hence optimality bound $\approx 0.43$.
We leave the further study of the backward curvature constant to future work.

Notice that in the main proof of \Cref{threorem:greedy_almost_optimal} in \Cref{proof:greedy_almost_optimal}, we used $\sigma$ to bound only one particular type of sequence that becomes the prefix to $\mergesopt$, namely $\mergesgreedy$.
We may then check for prefixing only greedy sequences instead of taking the maximum across $\merges \in \mergealphabet^*, |\merges|\leq M$ as in \Cref{def:total_backward_curvature}:
\begin{align}
\hspace{-2mm}\sigmaoptgreedy = \left\{ 1-\frac{\kappa(\mergesgreedy_{<M} \mergesopt)-\kappa(\mergesopt)}{\kappa(\mergesgreedy_{<M})} \right\}
\end{align}
This yields $\sigmaoptgreedyhat = 1.5$ and therefore the bound of $\approx 0.52$.
More important than the particular bound value is that it is constant and that the BPE training algorithm can not be arbitratily suboptimal with sequence length.


\begin{table}
\centering
\resizebox{\linewidth}{!}{
\begin{tabular}{lcc}
\toprule
\textbf{Sequence}\hspace{-1cm} & \textbf{Pair frequencies} \\
\midrule
\textbf{Greedy}\hspace{-2cm} \\
\mop{a}{b}a\mop{a}{b}baa & \mergeemph{ab}: 2, ba: 2, aa: 2, bb: 1  \\
\mop{\mop{a}{b}}{a}\mop{a}{b}baa & \hspace{-5mm} \mergeemph{\mop{a}{b}a}: 1, \mop{a}{b}b: 1, ba: 1, aa:1, \mop{a}{\mop{a}{b}}: 1 \\
\textbf{Optimal}\hspace{-2cm} & \\
a\mop{b}{a}ab\mop{b}{a}a & {ab: 2}, \mergeemph{ba}: 2, aa: 2, bb: 1 \\
a\mop{\mop{b}{a}}{a}b\mop{\mop{b}{a}}{a} & {ab: 2}, a\mop{b}{a}: 1, \mergeemph{\mop{b}{a}a}: 2, b\mop{b}{a}: 1 \\
\bottomrule
\end{tabular}
}
\captionof{myexamplef}{In case of $\str = abaabbaa$ the greedy BPE yields a suboptimal compression utility (5 vs 4 subwords). Highlighted pairs show which one was chosen.}
\label{example:greedy_suboptimal}
\vspace{2mm}
\end{table}


\begin{algorithm}[htbp]
{\fontsize{11}{12}\selectfont
\begin{algorithmic}[1]
\State $\merges \gets \langle \rangle$
\State $\str \gets \Call{LinkedList}{\str}$
\State $h \gets \Call{MaxHeap}{\textsc{Pairs}(\str)}$
\For{$i \text{ in } 0 .. M$}
    \State $pos \gets h.\Call{top}{}$
    \For{$(w_1, w_2) \text{ in } pos$}
    \label{algline:iterative_greedy_bpe_faster_inner_1}
    
    \State $h.$\Call{RemovePosition}{$w_1.\textit{prev}, w_1$}
    \State $h.$\Call{RemovePosition}{$w_2, w_2.\textit{next}$}
    \State $w_1.\textit{val} \gets w_1.\textit{val} + w_2.\textit{val}$
    \State $w_1.\textit{next} \gets w_2.\textit{next}$
    \State $w_2.\textit{next}.\textit{prev} \gets w_1$
    
    \State $h.$\Call{AddPosition}{$w_1.\textit{prev}, w_1$}
    \State $h.$\Call{AddPosition}{$w_1, w_1.\textit{next}$}
    
    \EndFor
    \label{algline:iterative_greedy_bpe_faster_inner_2}
    \State $\merges \gets \merges \circ \langle \merge \rangle$
\EndFor
\State \Return $\str, \merges$
\end{algorithmic}
}
\caption{%
    Iterative Greedy BPE (faster).\newline
    \textbf{Inputs}: string $\str$, merge count $M$\newline
    \textbf{Output}: tokenized string $\str$, merge sequence $\merges$
}
\label{algo:iterative_greedy_bpe_faster}
\end{algorithm}

\begin{figure*}[htbp]
\centering
\includegraphics[width=\linewidth]{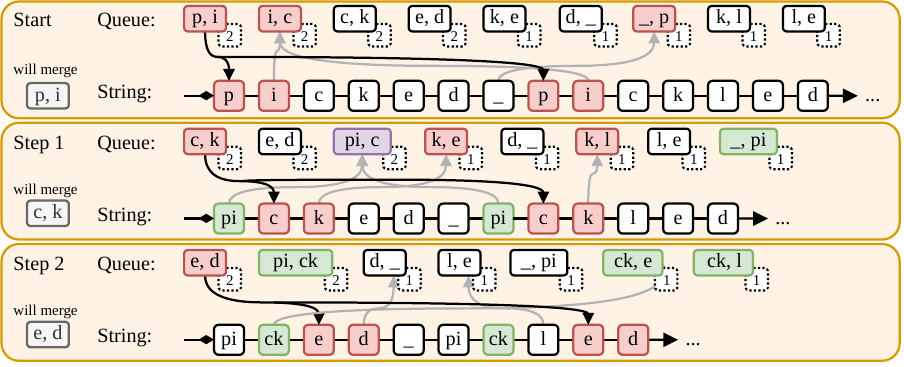}
\vspace{-8mm}
\caption{Visualization of linked list representation of the string and the associated priority queue (frequency values in dashed boxes) with merges. Nodes in red will be removed in the next step, nodes in green were added in contrast to the previous step and nodes in purple were just added but will be removed. Black lines from queue to the string show which nodes to merge. Grey lines show which pairs in the priority queue will have reduced frequencies.}
\label{fig:faster_merge_relink}
\end{figure*}

\section{A Runtime Speed-up}\label{sec:speed-up}
We now introduce a speed-up of the greedy BPE algorithm.
Assuming constant-time comparison of strings, finding the maximum pair count over the whole string is $\bigO{N}$, which is the same as applying one merge.
Therefore, this implementation has a runtime complexity of $\bigO{NM}$.
A large amount of time in the slow BPE implementation, presented by \citet{sennrich-etal-2016-neural} and shown in \Cref{algo:iterative_greedy_bpe_slow}, is spent on
\begin{enumerate*}
\item recalculating the frequencies of pairs (\Cref{algo:iterative_greedy_bpe_slow}, line \ref{algo:iterative_greedy_bpe_slow_line_pairs}) which are not affected by the most recent merge, and
\item scanning the whole string to apply a single merge (\Cref{algo:iterative_greedy_bpe_slow}, line \ref{algo:iterative_greedy_bpe_slow_line_merge}).
\end{enumerate*}
To make this explicit, consider the following example.\looseness=-1
\begin{myexample}
Consider $\str = abba (cddc)^n$ and merge $\mop{a}{b}$ for $n \geq 1$.
We can only apply the merge at the beginning of the string, which results in the forest $\mop{a}{b}ba (cddc)^n$.
However, \Cref{algo:iterative_greedy_bpe_faster} still scans the entirety of the sequence to recalculate the pair frequencies of $\mop{c}{d}, \mop{d}{c}$ and $\mop{c}{c}$.
This additional work is unnecessary.\looseness=-1
\end{myexample}

Our idea to speed up \Cref{algo:iterative_greedy_bpe_slow} stems from the insight that we do not have to iterate over the entire sequence, an $\bigO{N}$ operation, on each of the $M$ iterations.\footnote{$N$ is the string length $|\str|$ and $M$ the number of merges.}
Indeed, on the $t^{\text{th}}$ iteration, we show that one only has to do work proportional to the number of new nodes that are added to the forest (\Cref{algo:iterative_greedy_bpe_faster}, line~\ref{algline:iterative_greedy_bpe_faster_inner_1}).
To achieve this, we introduce a more efficient data structure for BPE.\footnote{\citet{kudo2018sentencepiece} make a similar observation, however, we prove a tighter bound on the runtime.}
Our first step is to treat the string as a linked list of subwords, initialized as a linked list of characters, that we destructively modify at each iteration.
With each possible merge, we store a list of pointers where the merge operation could happen.
The max heap is then sorted by the size of the sets.
Lines \ref{algline:iterative_greedy_bpe_faster_inner_1} to \ref{algline:iterative_greedy_bpe_faster_inner_2} in \Cref{algo:iterative_greedy_bpe_faster} show the necessary operations needed to be performed on the linked list.
Notably \textsc{RemovePosition} removes the specific pair position from the set in the max heap and \textsc{AddPosition} adds it.

See \Cref{fig:faster_merge_relink} for an illustration of applying a single merge in one place based on the introductory example in \Cref{tab:micro_bpe_example}.
The possible merge pairs are stored with a priority queue with their frequency as the sort key.
During one operation, we need to remove the top merge pair and add counts for the newly created possible merge.
The cost of one merge then becomes $\bigO{R_t \log M}$ where $R_t$ is the number of pairs in the string where the merge occurs and $\log M$ the complexity of adding and updating frequency of a new merge pair.
Note that it is not $\log N$, because we are keeping only top-$M$ possible pairs in the heap.

At first glance, this suggests the overall runtime of $\bigO{\sum_{t=1}^M R_t  \log M}$ with the worst case of the merge being applied along the whole string, therefore $\bigO{M N \log M}$.

\begin{restatable}{theorem}{fasterruntime}
\label{thm:faster_runtime}
Let $N$ be the length of the string $\str \in \alphabet^*$ that is given as input.
Then, \Cref{algo:iterative_greedy_bpe_faster} runs in $\bigO{N \log M}$ time.
\end{restatable}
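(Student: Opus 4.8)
The plan is to prove that the total work summed over all $M$ iterations is $\bigO{N \log M}$ via an amortized analysis, rather than bounding each iteration separately (which only gives the weaker $\bigO{M N \log M}$). The cost of iteration $t$ is $\bigO{R_t \log M}$, where $R_t$ is the number of positions where merge $\merge_t$ is applied. Since the $\log M$ factor comes uniformly from heap operations, the heart of the argument is to show that
\begin{equation}
\sum_{t=1}^M R_t = \bigO{N}.
\end{equation}
If I can establish this, the theorem follows immediately by factoring out $\log M$.

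The key combinatorial observation is that $R_t$ counts the number of \emph{new} forest nodes created when applying $\merge_t$: each application of a merge replaces two adjacent nodes by one new parent node. First I would set up the forest/linked-list view from \Cref{fig:merge_tree_example}: the algorithm begins with $N$ leaf nodes (the characters of $\str$) and, over the course of execution, builds a partial bracketing forest on top of them. Each single application of any merge creates exactly one internal node. Hence $\sum_{t=1}^M R_t$ equals the total number of internal nodes created across all iterations. I would then bound this count structurally: every internal node merges two previously existing nodes into one, so each application strictly decreases the number of ``active'' top-level symbols in the linked list by one. Starting from $N$ symbols and never creating symbols out of nothing, the total number of merge applications over the entire run is at most $N - 1$. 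This gives $\sum_{t=1}^M R_t \leq N - 1 = \bigO{N}$, which is exactly the compression-utility fact flagged in \Cref{defn:compression} (a length-$(|\str|-1)$ fully-merging sequence achieves $\kappa_{\str} = |\str| - 1$): each productive application raises compression utility by one, and utility is bounded by $N-1$.

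I would then verify the per-application cost is genuinely $\bigO{\log M}$ and does not hide extra factors. Each application (lines~\ref{algline:iterative_greedy_bpe_faster_inner_1}--\ref{algline:iterative_greedy_bpe_faster_inner_2} of \Cref{algo:iterative_greedy_bpe_faster}) performs a constant number of linked-list pointer updates (each $\bigO{1}$) together with a constant number of \textsc{RemovePosition}/\textsc{AddPosition} calls on the heap. The crucial design choice, noted in the text, is that the heap holds only the top-$M$ candidate pairs, so each heap update costs $\bigO{\log M}$ rather than $\bigO{\log N}$. I would also account for preprocessing: building the initial linked list and the initial set of pair positions is $\bigO{N}$, and constructing the heap is $\bigO{N + N\log M} = \bigO{N \log M}$ (or $\bigO{N}$ if built by heapify, which is subsumed). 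Combining the amortized body cost $\bigO{(\sum_t R_t)\log M} = \bigO{N \log M}$ with the $\bigO{N \log M}$ setup yields the claimed bound.

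The main obstacle I anticipate is the bookkeeping in the heap-maintenance step, not the amortization itself. When a merge is applied at a position, it invalidates the two \emph{adjacent} candidate pairs (left neighbor with $w_1$, and $w_2$ with its right neighbor) and creates two \emph{new} adjacent pairs; I must argue carefully that the number of such neighbor updates triggered by a single application is $\bigO{1}$, so that the total neighbor-update work is also absorbed into $\bigO{\sum_t R_t \log M}$. The delicate case is \emph{overlapping} occurrences of the same pair (e.g., the \textit{aaa} example), where applying a merge at one position can destroy a candidate occurrence at an overlapping position; I would rely on the left-to-right, non-overlapping application convention (the $\enc$ semantics and the overlap-adjusted counts adopted in the paper) to ensure each destroyed or created position is charged to exactly one application, preserving the $\bigO{1}$-per-application accounting and hence the overall $\bigO{N}$ total.
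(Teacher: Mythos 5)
Your proposal is correct and follows essentially the same route as the paper's proof: an amortized analysis showing $\sum_{t=1}^M R_t \leq N-1$ (since each merge application consumes one top-level symbol from the initial $N$), multiplied by the $\bigO{\log M}$ per-application heap cost. You simply spell out in more detail the steps the paper asserts tersely (the counting argument for $\sum_t R_t$, the preprocessing cost, and the $\bigO{1}$ neighbor-update bookkeeping per application).
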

\begin{proof}
Let $R_t$ be the amount of work performed at each iteration modifying the data structure.
We additionally do $\bigO{\log M}$ work updating the priority queue on lines~\ref{algline:iterative_greedy_bpe_faster_inner_1} to \ref{algline:iterative_greedy_bpe_faster_inner_2} in \Cref{algo:iterative_greedy_bpe_faster} since it has at most $M$ elements.
Thus, \Cref{algo:iterative_greedy_bpe_faster} clearly runs in $\bigO{\sum_{t=1}^M R_t \log M}$.
We perform an amortized analysis.
For this, we first make an observation about the upper bound on the number of merges and then show amortized analysis.
However, for a string $\str$ of length $N$, there are at most $N-1$ merges that can be applied to $\str$.
This implies that 
$\sum_{t=1}^M R_t \leq N$. 
Thus, $\bigO{\sum_{t=1}^M R_t \log M} = \bigO{N \log M}$, which proves the result.
\end{proof}





\section{An Exact Algorithm}\label{sec:dynamic-program}

In this section, we turn to developing an algorithm for exactly solving the BPE problem, i.e.,  \cref{eq:optimization-objective}.
We change algorithmic paradigms and switch to memoization.
While we are not able to devise a polynomial-time scheme, we are able to find an exact algorithm that is, in some cases, faster than the brute-force technique of enumerating all valid merge sequences. 
We first analyze the brute-force method of enumerating all valid merge sequences.

\begin{restatable}{proposition}{validmergessize}
\label{prop:validmerges_size}
The set of valid merges of length $M$ over a string $\str \in \alphabet^*$ is $\bigO{\min\left(|\Sigma|^{2M}, N^M\right)}$.
\end{restatable}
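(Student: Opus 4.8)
The plan is to bound the size of the search space by controlling the branching factor at each of the $M$ positions of a valid merge sequence and then multiplying across positions. Concretely, I would count valid sequences $\merges = \langle \merge_1, \ldots, \merge_M\rangle$ by building them left to right and bounding, for each $n$, the number of admissible choices for $\merge_n$ given the prefix $\merges_{<n}$. Since only merges that actually fire on $\str$ are relevant to \cref{eq:optimization-objective}, I would restrict attention to \emph{effective} merges (those producing at least one replacement); this restriction is exactly what produces the dependence on $N$.

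First I would establish two per-step bounds. (i) By \Cref{def:valid}, $\merge_n = \mop{\merge'}{\merge''}$ with each component drawn from the pool $\alphabet \cup \{\merge_1, \ldots, \merge_{n-1}\}$ of size at most $|\alphabet| + n - 1$, so there are at most $(|\alphabet| + n - 1)^2$ candidate merges at step $n$. (ii) For $\merge_n$ to fire, its two components must form an adjacent pair of subwords in the current forest $\enc_{\merges_{<n}}(\str)$; that forest has at most $N$ roots, hence at most $N-1$ adjacent pairs, so at most $N-1$ distinct effective merges. Combining, the branching factor at step $n$ is at most $\min\!\bigl((|\alphabet|+n-1)^2,\, N-1\bigr)$.

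Multiplying over $n = 1, \ldots, M$ then gives
\begin{equation}
\#\{\text{relevant } \merges : |\merges| = M\} \le \prod_{n=1}^{M} \min\!\bigl((|\alphabet|+n-1)^2,\, N-1\bigr) \le \min\Bigl((N-1)^M,\ \prod_{n=1}^{M}(|\alphabet|+n-1)^2\Bigr),
\end{equation}
and I would finish by reading off the two factors: $(N-1)^M = \bigO{N^M}$, while $\prod_{n=1}^{M}(|\alphabet|+n-1)^2 = \bigO{|\alphabet|^{2M}}$ since each factor is $\bigO{|\alphabet|^2}$. Taking the minimum yields $\bigO{\min(|\alphabet|^{2M}, N^M)}$.

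The step I expect to be most delicate is the $|\alphabet|^{2M}$ bound: the pool available at step $n$ has size $|\alphabet| + n - 1$ rather than $|\alphabet|$, so the product $\prod_{n=1}^M(|\alphabet|+n-1)^2$ only collapses to $\bigO{|\alphabet|^{2M}}$ when $M$ is treated as fixed (or $M = \bigO{|\alphabet|}$), where it is the leading term in $|\alphabet|$; I would state this regime explicitly rather than hide it. The other point requiring care is justifying the restriction to effective merges: I would argue that a non-firing merge leaves the forest, and hence $\kappa_{\str}$, unchanged, so it never needs to be enumerated when solving \cref{eq:optimization-objective}, which is what legitimizes using the adjacency count $N-1$ in place of the pool-squared count.
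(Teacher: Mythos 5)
Your proposal is correct and, for the $N^M$ half, essentially reproduces the paper's argument: the paper also counts at most $N-1$ admissible merges for the first position, $N-2$ for the second, and so on, obtaining a falling factorial that it bounds by $N^M$; your uniform $(N-1)^M$ bound is the same idea, and your explicit justification for discarding non-firing merges (they leave the forest and hence $\kappa_{\str}$ unchanged) is a point the paper leaves implicit. For the $|\alphabet|^{2M}$ half you take a genuinely different route. The paper reasons about the \emph{yields} of the merges: it observes that each merge in a length-$M$ valid sequence has yield in $\alphabet^{\leq M}$, bounds $|\alphabet^{\leq M}|$ by $M|\alphabet|^M$, multiplies in orderings via a falling factorial, and lands on $(M|\alphabet|^M)^M$, which it then asserts is $\bigO{|\alphabet|^{2M}}$ --- a step that is at best very loose (as written, $(M|\alphabet|^M)^M = M^M|\alphabet|^{M^2}$). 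You instead count directly in the space of merge \emph{constructions}: by \cref{def:valid} each component of $\merge_n$ comes from a pool of size $|\alphabet|+n-1$, giving a per-step branching factor of $(|\alphabet|+n-1)^2$ and a product $\prod_{n=1}^{M}(|\alphabet|+n-1)^2$. This is tighter and more transparent than the paper's yield-based count, and you are right to flag that it only collapses to $\bigO{|\alphabet|^{2M}}$ in the regime where $M$ is fixed (or $M=\bigO{|\alphabet|}$); the paper's own statement tacitly requires the same regime, so making it explicit is an improvement rather than a weakness. In short: same decomposition into two bounds combined by a minimum, same argument for the $N^M$ term, and a cleaner, more defensible derivation of the $|\alphabet|^{2M}$ term.
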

\begin{proof}
The proof can be found in \Cref{sec:proofs}.
\end{proof}

A simple direct enumeration of all possible merge sequences with the time complexity of one merge $\bigO{NM}$ gives us a brute-force algorithm that runs in $\bigO{NM \min\left(|\Sigma|^{2M}, N^M\right)}$ time. 
The brute-force program explores all possible sequences of merges---including many that are redundant.
For instance, both $\langle \mop{p}{o}, \mop{h}{a} \rangle$ and $\langle \mop{h}{a}, \mop{p}{o}\rangle$ induce the same partial bracketing when applied to another merge sequence, as in \cref{sec:merges}.
Luckily, we are able to offer an exact characterization of when two merge sequences induce the same bracketing. 
To this end, we provide the following definitions.
We use the term \defn{transposition} to refer to the swapping of items; i.e., a transposition $(i,j)$ over a merge sequence $\merges$ refers to the swapping of $\merge_i$ and $\merge_j$. 

\begin{definition}
A pair of merges $\merge = \mop{\merge_{n}}{\merge_{m}}$ and $\merge' = \mop{\merge_{n'}}{\merge_{m'}}$ \defn{conflicts} if for a symbol $a \in \alphabet$ and strings $\str, \str' \in \alphabet^*$, the yield of $\mop{\merge_{n}}{\merge_{m}}$ is $\str a$ and $\mop{\merge_{n}'}{\merge_{m}'}$ is $a \str'$.
\end{definition}
\begin{definition}\label{def:safe}
A transposition $(i, j)$ is \defn{safe} if and only if, for all $k< j$, $\merge_{k}$ does not conflict with $\merge_j$ and, for all $k>i$, $\merge_{k}$ does not conflict with $\merge_i$. 
A permutation $\pi = \langle \rho_1\rho_2\cdots\rho_n \rangle$, decomposed into transpositions, that maps one valid merge sequence $\merges$ to another valid merge sequence $\pi(\merges) = \merges'$ is \defn{safe} if and only if all transpositions are safe.
\end{definition}
\noindent Informally, \cref{def:safe} says that for a permutation to produce a valid merge sequence, there should be no conflicts between the swapped merges and all merges in between.
For example, given the merge sequence $\merges = \langle \mop{a}{b}, \mop{d}{d}, \mop{c}{a} \rangle$, the permutation $\pi = \langle (1, 3) \rangle$ would not be safe.

The reason for this definition is that safe permutations characterize when two merge sequences always give the same results.
Indeed, for $\str = ddabcacab$, applying the first merge sequence: $\enc_{\merges}(\str) = \mop{d}{d}\mop{a}{b}\mop{c}{a}\mop{c}{a}b$.
In contrast, applying the permuted one gives an alternative outcome: $\enc_{\pi(\merges)}(\str) = \mop{d}{d}\mop{a}{b}\mop{c}{a}c\mop{a}{b}$.

\begin{definition}
Two merge sequences $\merges$ and $\merges'$ are \defn{equivalent} if and only if, for all $\str \in \alphabet^*$, $\enc_{\merges}(\str) = \enc_{\merges'}(\str)$.
Symbolically, we write $\merges \equiv \merges'$ if $\merges$ and $\merges'$ are equivalent.
\end{definition}

\begin{restatable}{proposition}{thmequivsafe}
Two valid merge sequences $\merges$, $\merges' \in \validmerges$ are equivalent, i.e., $\merges \equiv \merges'$, if and only if there exists a safe permutation $\pi$ such that $\pi(\merges) = \merges'$.
\end{restatable}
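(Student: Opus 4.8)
The plan is to prove the two implications separately, routing everything through the decomposition of $\pi$ into transpositions and a single local commutation fact about adjacent merges. For the direction $(\Leftarrow)$, suppose $\pi$ is safe with $\pi(\merges)=\merges'$. I would first realize each safe transposition $(i,j)$ as a chain of adjacent swaps---bubbling $\merge_j$ leftward past $\merge_{j-1},\dots,\merge_i$ and $\merge_i$ rightward---checking that every intermediate sequence stays valid; this uses that validity of both endpoints forbids either swapped merge from being a submerge-component of the other, so no dependency is ever reversed. The safety conditions of \Cref{def:safe} supply exactly the non-conflict facts needed for each adjacent swap along this chain. The technical heart is then a local lemma: if two merges $\merge,\merge'$ occur adjacently in a valid sequence, neither conflicts with the other, and swapping them preserves validity, then $\enc_{\merge'}(\enc_{\merge}(\overline{\merges}))=\enc_{\merge}(\enc_{\merge'}(\overline{\merges}))$ for every forest $\overline{\merges}$. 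To prove it I would argue that order can matter only if applying one merge either creates a node the other then consumes (ruled out by non-dependency) or destroys an occurrence of the other by competing for a shared boundary node; the no-conflict hypothesis is precisely what excludes the latter, since such competition would force the last character of one yield to coincide with the first character of the other. Induction over the adjacent swaps then yields $\merges\equiv\merges'$.

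For the direction $(\Rightarrow)$, suppose $\merges\equiv\merges'$. The first step is to show the two sequences are permutations of one another. A node labeled by a merge $\merge$ can be produced under $\enc$ only by applying $\merge$ itself, so taking $\str=\yield(\merge)$ and using validity to build $\merge$ up from its submerges shows that $\merge$'s labeled node appears in $\enc_{\merges}(\yield(\merge))$; equivalence then forces $\merge$ to occur in $\merges'$ as well, and symmetrically. After discarding redundant repeated merges (which never change $\enc$), this gives equality of the underlying multisets, so $\merges'=\pi(\merges)$ for some permutation $\pi$. It remains to show $\pi$ is safe, which I would do by contraposition: if some transposition in $\pi$ is unsafe, there is a conflicting pair---two merges whose yields share a boundary character---whose relative order $\pi$ reverses. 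I would then construct an explicit witness string, generalizing the example $\str=ddabcacab$ of the text, in which both conflicting merges' subtrees are realized as adjacent nodes and are forced to compete for the shared boundary, so that the two orders bracket that position differently. This exhibits a string on which $\enc_{\merges}$ and $\enc_{\merges'}$ disagree, contradicting $\merges\equiv\merges'$.

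The main obstacle is the witness construction in the $(\Rightarrow)$ direction. A raw conflict tells us only that two yields overlap in a single character; to turn this into an observable difference I must embed that overlap in a context that simultaneously (i) forces all submerges of both conflicting merges to fire, so their subtrees actually materialize as adjacent nodes, and (ii) admits no alternative placement that would let both merges fire regardless of order. Securing (i) and (ii) at once---while respecting that the intervening merges are, by \Cref{def:safe}, non-conflicting with the endpoints and hence cannot be recruited to block the competition---is the delicate part, and it is dual to verifying, inside the local commutation lemma of $(\Leftarrow)$, that no boundary interaction other than the single-character one can arise. I expect this bookkeeping to be where the real work lies; by contrast, the multiset-to-permutation reduction and the induction over adjacent swaps should be routine.
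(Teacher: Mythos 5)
Your skeleton matches the paper's: one direction by contraposition through a conflict witness, the other by induction showing that a safe permutation cannot change which merges fire where. The extra care you take in the ($\Leftarrow$) direction---factoring each safe transposition into adjacent swaps and isolating a local commutation lemma---is a legitimate refinement of the paper's induction and does not change the substance.

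The genuine problem is in your ($\Rightarrow$) direction, in the step the paper silently skips and you try to supply: showing that equivalent sequences are permutations of one another. You argue that for each $\merge$ in $\merges$, the node labeled $\merge$ appears in $\enc_{\merges}(\yield(\merge))$ because validity lets you ``build $\merge$ up from its submerges.'' That inference fails: validity guarantees the submerges occur earlier in $\merges$, not that their left-to-right application to $\yield(\merge)$ produces the particular bracketing that $\merge$ needs. Concretely, take $\merges = \langle \mop{a}{a},\, \mop{a}{\mop{a}{a}}\rangle$, which is valid, and $\str = \yield(\mop{a}{\mop{a}{a}}) = aaa$. Applying $\mop{a}{a}$ left to right gives $\mop{a}{a}\,a$, after which the pattern $a,\mop{a}{a}$ never occurs---the leftover $a$ of any odd run always sits to the \emph{right} of the $\mop{a}{a}$ nodes---so $\mop{a}{\mop{a}{a}}$ never fires, on this string or on any other. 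Such inert merges are not ``redundant repeated merges,'' so your preprocessing does not remove them, and two equivalent sequences can differ by exactly such merges without any permutation relating them; the multiset-equality step therefore breaks. (This points at a fragility of the proposition as stated, but your write-up is the one that commits to the false claim; the paper instead starts from ``all non-safe permutations mapping $\merges$ to $\merges'$'' and never establishes that any permutation exists.) Separately, you correctly identify the conflict-witness construction as the crux of ($\Rightarrow$) but leave it unexecuted; to be fair, the paper does no better, asserting without argument that the string $\str a \str'$ is bracketed differently under the two orders.
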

\begin{proof}
The proof can be found in \Cref{sec:proofs}.
\end{proof}
Following the previous example, it is easy to verify that $\langle \mop{a}{b}, \mop{d}{d}, \mop{c}{a} \rangle \equiv \langle \mop{a}{b}, \mop{c}{a}, \mop{d}{d} \rangle$.
In contrast to synthetic examples with a constrained alphabet of, e.g., $\{a, b, c\}$, far fewer merge conflicts arise in natural language.
We can leverage this to develop a faster algorithm that only explores paths that are not equivalent to each other.
We first define the concept of partial ordering between merges.

\begin{definition}
The \defn{merge partial ordering} $\merge' \gtrdot \merge''$ is defined as $\neg \text{conflicts}(\merge', \merge'') \wedge  \neg (|\yield(\merge')| < |\yield(\merge'')|) \wedge \neg (\yield(\merge') <_L \yield(\merge''))$ where $>_L$ is lexicographical ordering.
\end{definition}

All valid merge sequences are equivalent to some merge sequence which is partially ordered using $\gtrdot$ so that no neighbouring elements violate this partial ordering.
The brute-force algorithm works as depth-first search through an acyclic graph: each state corresponds to a unique sequence of merges and each transition corresponds to appending a merge to the end of the current state's merges.
For the improved version, we make sure that only sequences which are ordered using $\gtrdot$ are searched and the rest are pruned.
The pseudocode for the program is shown in \Cref{algo:exact_bpe_dynamic}.
Even though the runtime is still prohibitively slow for application, \Cref{fig:dfs_speed} demonstrates how much speed is gained over the brute-force version which explores all states.

\begin{algorithm}[htbp]
{\fontsize{11}{12}\selectfont
\begin{algorithmic}[1]
\State $q \gets \Call{Stack}{\,}$
\State $q.\Call{push}{\langle\rangle, \str}$
\State $\merges^*, \str^* \gets \langle \rangle, \str$
\While{$|q| \neq 0$}
    \State $\merges, \str \gets q.\textsc{pop}()$
    \State \textbf{if} $|\merges| = M$ \textbf{then} \textbf{continue}
    \For{$\merge \in \textsc{Pairs}(\str)$}
        \State\colorbox{\algcolorC}{
        \textbf{if} $|\merges| = 0 \vee \merge \gtrdot \merges_{-1}$
        }
            \State \hspace{\algorithmicindent}
            $\str' \gets \Call{SingleApply}{\str, \merge}$
            \State \hspace{\algorithmicindent}
            $\merges' \gets \merges \circ \merge$
            
            \State \hspace{\algorithmicindent}
            \textbf{if} $|\str'| < |\str^*|$
                \State \hspace{\algorithmicindent}\hspace{\algorithmicindent}
                $\merges^*, \str^* \gets \merges', \str'$ 
            \State \hspace{\algorithmicindent}
            \textbf{end if}
            
            \State \hspace{\algorithmicindent}
            $q.\textsc{push}(\merges', \str')$
        \State\colorbox{\algcolorC}{
        \textbf{end if}
        }
    \EndFor
\EndWhile
\State \Return $\merges^*, \str^*$
\end{algorithmic}
}
\caption{%
    Exact BPE with memoization guard.\newline
    Removing segments marked with \colorbox{\algcolorC}{\textcolor{\algcolorC}{X}} would result in the brute-force version.\newline
    \textbf{Inputs}: string $\str$, merge count $M$\newline
    \textbf{Output}: tokenized string $\str$, merge sequence $\merges$
}
\label{algo:exact_bpe_dynamic}
\end{algorithm}

\section{Conclusion}

In this paper, we developed the formalisms surrounding the training task of BPE, a very popular tokenization algorithm in NLP.
This allowed us to prove a lower bound on the compression utility by greedy BPE as $1-e^{-\sigmaopt}$.
We further analyzed the runtime of the naïve and faster greedy BPE algorithms and provided a speedup for finding an optimal BPE merge sequence.
Future works should focus on providing either formal guarantees for $\sigmaopt$ or studying $\sigmaopt'$ across natural languages.

\section{Limitations}

Our work has focused strongly on the formal aspects of BPE.
NLP practictioners should not be dissuaded from using BPE for subword tokenization, despite our presentation of examples where greedy BPE fails.
Indeed, in contrast to synthetic examples on toy alphabet, on real data we made an observation that greedy BPE may be close to optimal.\looseness=-1

\section*{Acknowledgements}

We would like to thank Andreas Krause and Giorgio Satta for discussing the proof of \Cref{threorem:greedy_almost_optimal}.
Clara Meister was supported by the Google PhD Fellowship. Juan Luis Gastaldi has received funding from the European Union's Horizon 2020 research and innovation programme under grant agreement No 839730.

\vspace{-2mm}

\bibliography{misc/bibliography}
\bibliographystyle{misc/acl_natbib}

\appendix

\onecolumn


\section{Proofs}
\label{sec:proofs}

Our proof of approximate optimality is based on the proof of greedily sequence maximizing submodular functions by \citet{alaei2010maximizing,zhang2015string}.
However, we leverage a problem-specific property, which we dub hiearchical submodularity. We restate the definition here for ease.

\hierarchicalsub*

\begin{lemma} 
\label{lem:big-proof-helper}
Let $\merges', \merges \in \validmerges$ be valid merge sequences. Then, there exists a merge $\nu$ in $\merges$ such that $\merges'\nu$ is a valid merge sequence and $\kappa_{\str}(\nu\mid\merges') \geq \frac{\kappa_{\str}(\merges\mid\merges')}{|\merges|}$. In words, the compression gain of some element in $\merges$ with respect to $\merges'$ is greater or equal to the average compression gain per element of $\merges$ with respect to $\merges'$
\end{lemma}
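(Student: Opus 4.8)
The plan is to combine the telescoping decomposition of the joint compression gain with a per-term charging argument that routes each marginal onto an \emph{addable} submerge, and then finish by averaging. Writing $\merges = \langle \merge_1, \dots, \merge_k \rangle$ with $k = |\merges|$ and $\merges_{<i} = \langle \merge_1, \dots, \merge_{i-1}\rangle$, I would first record the exact identity
\begin{equation}
\kappa_{\str}(\merges \mid \merges') = \sum_{i=1}^{k} \kappa_{\str}(\merge_i \mid \merges' \merges_{<i}),
\end{equation}
which is immediate from \Cref{def:compression_gain_utility} by telescoping, using that every prefix of the valid sequence $\merges'\merges$ (valid since $\validmerges$ is closed under concatenation) is itself valid. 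Since there are $k$ summands, once I establish that each summand is dominated by the marginal gain of some merge $\nu_i$ appearing in $\merges$ with $\merges'\nu_i \in \validmerges$, the bound $\kappa_{\str}(\merges\mid\merges') \leq \sum_{i=1}^k \kappa_{\str}(\nu_i\mid\merges')$ followed by a pigeonhole step yields a single $\nu$ with $\kappa_{\str}(\nu\mid\merges') \geq \kappa_{\str}(\merges\mid\merges')/k$, which is the claim.

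For the $i$-th term I would choose $\nu_i$ as follows. If $\merge_i$ is already addable to $\merges'$, set $\nu_i = \merge_i$. Otherwise some immediate submerge of $\merge_i$ lies outside $\merges' \cup \alphabet$, and I would descend through such submerges until reaching a non-trivial merge $\nu_i \subset \merge_i$ whose two immediate submerges both lie in $\merges' \cup \alphabet$ while $\nu_i \notin \merges'$. This descent terminates before reaching $\alphabet$, so $\nu_i$ is a genuine addable merge, i.e. $\merges'\nu_i \in \validmerges$. Because $\nu_i \subset \merge_i$ and $\merges'\merges$ is valid, $\nu_i$ must occur before $\merge_i$ in $\merges'\merges$; as $\nu_i \notin \merges'$, it occurs inside $\merges_{<i}$, so $\nu_i$ is indeed a merge appearing in $\merges$.

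The core step is the per-term bound $\kappa_{\str}(\nu_i \mid \merges') \geq \kappa_{\str}(\merge_i \mid \merges'\merges_{<i})$. When $\nu_i = \merge_i$ this is just \Cref{def:submodular} applied to the prefix pair $\merges' \preccurlyeq \merges'\merges_{<i}$. The interesting case is a proper submerge: here I would split $\merges_{<i} = \merges_B\,\nu_i\,\merges_C$ at the first occurrence of $\nu_i$ and apply hierarchical sequence submodularity (\Cref{def:hierarchical-submodular}) to the valid sequence $\merges'\merges_B\,\nu_i\,\merges_C\,\merge_i = \merges'\merges_{<i}\merge_i$, giving $\kappa_{\str}(\nu_i \mid \merges'\merges_B) \geq \kappa_{\str}(\merge_i \mid \merges'\merges_{<i})$; then \Cref{def:submodular} on the prefix pair $\merges' \preccurlyeq \merges'\merges_B$ upgrades the conditioning set to $\kappa_{\str}(\nu_i \mid \merges') \geq \kappa_{\str}(\nu_i \mid \merges'\merges_B)$. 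Chaining the two inequalities gives the per-term bound. Crucially, I would exploit the \emph{natural} occurrence of $\nu_i$ already present in $\merges_{<i}$ rather than inserting a fresh copy after $\merges'$: inserting a copy can alter the induced forest through conflicts, whereas splitting at the existing occurrence keeps every intermediate sequence valid and the conditioning sets aligned.

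Summing the per-term bounds and averaging over the $k$ summands selects the maximizer $\nu = \nu_{i^\star}$, which lies in $\merges$, satisfies $\merges'\nu \in \validmerges$, and obeys $\kappa_{\str}(\nu\mid\merges') \geq \kappa_{\str}(\merges\mid\merges')/|\merges|$. I expect the main obstacle to be precisely the descent-and-charging step for non-addable $\merge_i$: one must produce an addable submerge that is genuinely new (not in $\merges'$, so its marginal is meaningful) and whose natural position in $\merges_{<i}$ makes hierarchical submodularity and ordinary submodularity composable in the correct direction. Checking that $\merges'\merges_B\,\nu_i\,\merges_C\,\merge_i$ stays valid and that the two monotonicity inequalities chain the right way is the delicate part; the telescoping and pigeonhole steps surrounding it are routine.
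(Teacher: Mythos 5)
Your proposal is correct and follows essentially the same route as the paper's proof: telescope $\kappa_{\str}(\merges\mid\merges')$ into per-step marginals, charge each marginal to an addable merge of $\merges$ via the same two-inequality chain (hierarchical sequence submodularity at the submerge's occurrence in $\merges_{<i}$, then ordinary submodularity to shrink the conditioning prefix to $\merges'$), and finish by a max-versus-average argument. The only difference is organizational --- you charge every term and then apply pigeonhole, whereas the paper first selects the maximizing term and then performs the addable/non-addable case split on it --- and your explicit choice of a submerge not already in $\merges'$, split at its first occurrence, makes the application of hierarchical submodularity somewhat more careful than the paper's appeal to trivial submerges.
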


\begin{proof}
Let us choose on of the possible maximums, $t = \argmax_{1\leq t'\leq |\merges|} \kappa_{\str}(\merge_{t'} \mid \merges'\merges_{<t'})$.
Because we are taking the maximum, which is always equal to or greater than the average,\footnote{Proof of this algebraic statement is omitted for brevity.} then $\kappa_{\str}(\merge_t \mid \merges'\merges_{<t}) \geq \frac{1}{|\merges|} \sum_{t'=1}^{|\merges|} \kappa_{\str}(\merge_{t'} \mid \merges'\merges_{<t'})$.
Then, we have that either:

\begin{itemize}
    \item $\merges\merge_{t} \in \validmerges$, in which case the result follows by submodularity, or
    \item $\merges\merge_{t} \notin \validmerges$, in which case there exists a $\merge_{t'}$ such that:
    \begin{itemize}
        \item $ \merge_{t'} \subset \merge_{t}$
        \item $\merges'\merge_{t}' \in \validmerges$
        \item $\merge_{t'}$ in $\merges$
        \item $\kappa_{\str}(\merge_{t} \mid \merges'\merges_{<t}) \leq \kappa_{\str}(\merge_{t'} \mid \merges'\merges_{<t'}) \leq \kappa_{\str}(\merge_{t'} \mid \merges')$
    \end{itemize}
    In particular, all trivial submerges of $\merge_t$ (i.e., all submerges of $\merge_t$ whose constituents are in $\alphabet$) fulfill all four conditions: the first one by definition, the second by definiton of $\validmerges$, the third because $\merges \in \validmerges$, and the fourth by hierarchical submodularity (first inequality) and by submodularity (second inequality).
\end{itemize}

\end{proof}

We now proceed with the proof of approximate optimality of the greedy BPE merge sequence.

\greedyalmostoptimal*
\label{proof:greedy_almost_optimal}


\begin{proof}
We make use of the sequence $\mergesgreedy_{<M}$ (rather than $\mergesgreedy$) for reasons that will subsequently become clear.
From \Cref{lem:big-proof-helper}, we know that we can find  $\mergeopt_j$ such that $ \mergesgreedy_{<M}\mergeopt_j$ is a valid merge sequence and 
\begin{align}
\kappa(\mergeopt_j \mid \mergesgreedy_{<M}) &\geq \frac{1}{M}\kappa(\mergesopt \mid \mergesgreedy_{<M})\label{eq:chiliastic_0}
\end{align}

From the greedy property of $\mergesgreedy$, we know:
\begin{align}
\kappa(\mergegreedy_M \mid \mergesgreedy_{<M}) &\geq \kappa(\mergeopt_j \mid \mergesgreedy_{<M}) \\ 
\kappa(\mergegreedy_M \mid \mergesgreedy_{<M})
&\geq \frac{1}{M} \kappa(\mergesopt \mid \mergesgreedy_{<M}) & \prooftext{(from Eq.~\ref{eq:chiliastic_0})}\\
\kappa(\mergesgreedy_{<M}\mergegreedy_M) - \kappa(\mergesgreedy_{<M})
&\geq \frac{1}{M} (\kappa(\mergesgreedy_{<M} \mergesopt) - \kappa(\mergesgreedy_{<M})) & \prooftext{(definition expansion)} \label{eq:omegoid_0}
\end{align}

Now from backward curvature (\Cref{def:total_backward_curvature}) and by substituting $\mergesgreedy_{<M}$ for the prefix sequence:
\begin{align}
\sigmaopt &\geq 1- \frac{\kappa(\mergesgreedy_{<M}\mergesopt)-\kappa(\mergesopt)}{\kappa(\mergesgreedy_{<M})} \\
\sigmaopt\kappa(\mergesgreedy_{<M}) &\geq \kappa(\mergesgreedy_{<M})- \kappa(\mergesgreedy_{<M}\mergesopt)+\kappa(\mergesopt) \\
\kappa(\mergesgreedy_{<M}\mergesopt)-\kappa(\mergesgreedy_{<M}) &\geq \kappa(\mergesopt) - \sigmaopt\kappa(\mergesgreedy_{<M}) 
\end{align}

Applying this result to the right-hand side of \Cref{eq:omegoid_0}, we obtain the following:
\begin{align}
\kappa(\mergesgreedy_{<M}\mergegreedy_M) - \kappa(\mergesgreedy_{<M}) &\geq \frac{1}{M} (\kappa(\mergesopt) - \sigmaopt \kappa(\mergesgreedy_{<M})) & \prooftext{(total backward curvature)} \\
\kappa(\mergesgreedy) - \kappa(\mergesgreedy_{<M}) &\geq \frac{1}{M} (\kappa(\mergesopt) - \sigmaopt \kappa(\mergesgreedy_{<M})) & \prooftext{(definition)}
\end{align}
\begin{align}
\kappa(\mergesgreedy) &\geq \frac{1}{M} (\kappa(\mergesopt) - \sigmaopt \kappa(\mergesgreedy_{<M})) + \kappa(\mergesgreedy_{<M}) \hspace{-2cm} & \prooftext{(total backward curvature)} \\
%
&\geq \frac{1}{M} \kappa(\mergesopt) + \left(1-\frac{\sigmaopt}{M}\right) \kappa(\mergesgreedy_{<M}) \hspace{-2cm} & \prooftext{(algebraic manipulation)}\\
&\geq \frac{1}{M} \kappa(\mergesopt) \sum_{i=0}^{M-1} \left( 1- \frac{\sigmaopt}{M} \right)^i
& \prooftext{(recursive substitution of $\kappa(\mergesgreedy_{<i})$)} \\
&=\frac{1}{\sigmaopt} \left( 1- \left( 1-\frac{\sigmaopt}{M}\right)^M \right) \kappa(\mergesopt) \hspace{-3cm}
& \prooftext{(geometric sum)} \\
&=\frac{1}{\sigmaopt} \left( 1- \left( 1-\frac{\sigmaopt}{M}\right)^\frac{M}{\sigmaopt} \right)^{\sigmaopt} \kappa(\mergesopt) \hspace{-3cm}
& \prooftext{(preparation)}
\end{align}
We substitute $x=\frac{M}{\sigmaopt}$ in the inequality.
From $x> 0 \Rightarrow \left(1-\frac{1}{x}\right)^x \leq \frac{1}{e}$, we obtain and arrive at
\begin{align}
\kappa(\mergegreedy)
&\geq \frac{1}{\sigmaopt} \left( 1 - e^{-\sigmaopt}\right)
\end{align}
\end{proof}

\thmequivsafe*

\begin{proof}
\noindent$\Rightarrowbracketed$
We prove the first implication through contrapositive, i.e., we show that if there does \textit{not} exist such a safe permutation $\pi$, then the merge sequences are \textit{not} equivalent.
By supposition, all non-safe permutations mapping $\merges$ to $\merges'$ either have a conflict or do not preserve validity.
We handle each case separately.
\begin{itemize}
\item \textbf{Case 1:}
Suppose that the permutation $\pi$ re-orders two conflicting merges $\merge$ and $\merge'$.
By the definition of a conflict, $\merge$ has yield $\str a$ and $\merge'$ has yield $a \str'$ for $a \in \alphabet$ and $\str, \str' \in \alphabet^*$. 
Now, note the bracketing string $\str a \str'$ will be different under the original and permuted merge sequence.
\item \textbf{Case 2:}
Suppose that the permutation $\pi$ does not preserve validity.
Then, there exists a merge $\merge = (\merge', \merge'')$ such that either $\merge'$ or $\merge''$ occurs \textit{after} $\merge$ in the merge sequence. 
This also results in a different bracketing.
\end{itemize}

\noindent$\Leftarrowbracketed$
Next, we want to show the converse, i.e., for any safe permutation $\pi$, we have $\merges \equiv \pi(\merges)$. 
Let $\merges = \langle \merge_1, \ldots, \merge_N\rangle$ be a merge sequence of length $N$, and let $\pi$ be a safe permutation.
We proceed by induction on the $n$.
\begin{itemize}
\item \textbf{Base Case:}
Since $\pi$ is safe, then for $\mop{a}{b} = \pi(\merges)_1$, $a$ and $b$ are necessarily characters in $\alphabet$.
\item \textbf{Inductive Step}:
Suppose for $k = n-1$, $\pi(\merges)_{\leq k}$ applies merges which are applied by $\merges$.
We then show $\pi(\merges)_n$ also applies the same merges as $\merges$.
Consider $\pi(\merges)_n = (\merge_m, \merge_{m'})$; since $\pi$ is safe, both $\merge_m$ and $\merge_{m'}$ already exist in $\enc_{\merges_{\leq n}}(\str)$.
Moreover, since there are no conflicts, applying $\pi(\merge)_n$ results in the same encoded sequence.
\end{itemize}
\vspace{-5mm}
\end{proof}

\validmergessize*

\begin{proof}
On one hand, we  note that we have an upper bound of $N-1$ possible merges that can occupy the first element of the sequence, assuming every symbol in $\str$ is distinct. 
Next, we have $N-2$ possible merges that can occupy the second element of the sequence, again, assuming every symbol in $\str$ is distinct. 
Continuing this pattern, we arrive at a simple upper bound on the number of merges $\prod_{m=0}^{M-1} \left(N - 1 - m\right)$.
This quantity is recognizable as a falling factorial, which gives us the closed form $\frac{\left(N - 1\right)!}{\left(N - M - 2\right)!}$.
This can be trivially bounded by $N^M$.
However, on the other hand, we know a valid merge sequence can produce merges with a yield up to length $M$, and there are ${\alphabet^{\leq M} \choose M}$ unique sequences.
We can upper-bound the number of valid merge sequences by the total number of all possible merge sequences, of which there are $M!$.
The size of $\alphabet^{\leq M}$ is the sum $|\alphabet|^1 + |\alphabet|^2 + \ldots |\alphabet|^M$ which is less than $M|\alphabet|^M$.
Again, with $M!$, this leads to the falling factorial $\frac{(M|\alphabet|^M)!}{(M|\alphabet|^M-M)!}$ which we can upper bound by $(M|\Sigma|^M)^M$ which is in $\bigO{|\Sigma|^{2M}}$.
Taking the min of these two upper bounds gives us the overall upper bound.
\end{proof}

\section{BPE Modifications}

In this section, we describe multiple modifications to the greedy BPE algorithm which speed up the runtime.
We do not address popular heuristic modifications such as lowercasing the text or adding 20\% of the most frequent words to the subword dictionary.

\subsection{(Not) Merging Space}

Currently, spaces are treated as any other characters and are allowed to be part of merges.
Therefore in the string \textit{``not{\spacesymbol}that{\spacesymbol}they{\spacesymbol}watch{\spacesymbol}the{\spacesymbol}watch''} the first merge is \textit{\mopt{\spacesymbol}{t}} and the string looks as \textit{``not\mop{\spacesymbol}{t}hat\mopt{\spacesymbol}{t}hey watch\mopt{\spacesymbol}{t}he watch''}.
The next merge may be across tokens: \textit{\mopt{t}{\mopt{\spacesymbol}{t}}}.
This is not desirable if we want only want to split tokens into subwords (i.e. use merges that do not contain spaces).

Furthermore, in \Cref{sec:greedy} we are duplicating work by computing pair frequencies and merges multiple times across the same tokens that occur multiple times (see previous string example).
In practice (\Cref{tab:dataset_statistics}), only 1.5\% of all tokens are unique.
We may then speed up our computation by considering only unique tokens.
Therefore, the new runtime complexity is $\bigO{V\cdot |\str_u|}$ where $\str_u = \{ t \mid \text{token } t \in \str\}$ which is $\frac{|\str|}{|\str_u|}\times$ faster.

\subsection{Non-iterative BPE}

A popular implementation of BPE-like algorithm in Python\footnote{\href{https://pypi.org/project/bpe/}{pypi.org/project/bpe}} uses a different speed-up mechanism to avoid $\bigO{N V}$ runtime.
This is done by:
\begin{enumerate}[noitemsep]
\item collecting all possible merges observed in the data up until some maximum yield size which determines the maximum subword size, such as 5 and
\item taking top-$M$ frequent pairs as part of the subword dictionary.
\end{enumerate}
Note that because of hiearchical submodularity (\Cref{def:hierarchical-submodular}), this will produce valid merges.
This is because if $\merge = \mop{\merge'}{\merge''}$ is chosen, so must $\merge'$ and $\merge''$ because they have at least the same frequency as $\merge$.
For example, for $abcabcd$, and maximum yield width 3, the merges would be $\mop{a}{b}, \mop{\mop{a}{b}}{c}, \mop{b}{c}, \mop{a}{\mop{b}{c}}, \ldots$.
The runtime of this is $\bigO{|\str| \log M}$ because we are scanning the whole string and at each point are modifying maximum heap.

However, it is easy to see that this approximation algorithm is not bounded.
For a constant maximum yield width of $w$, consider $\str = a^{wn}$ and $V = w+k$.
The shortest possible output of this algorithm will be $\merge^n$.
However, an optimal merge sequence can perform additional merge sequences, therefore producing $\vnu^{\frac{n}{2^k}}$.
The compressions are $wn - n$ and $wn - \frac{n}{2^k}$ and the ratio $\frac{wn - n}{wn - \frac{n}{2^k}}$ with lower bound of $0$ as supremum.
This means that we can construct adversarial example for which the compression given by this algorithm is arbitrarily suboptimal.



\section{Additional Experimental Details and Results}
\begin{table}[htbp]
\centering
\begin{tabular}{lcc}
\toprule
Sentence count (train) & 13M+13M \\
Sentence count (dev \& test) & 1M+1M \\
Total words & 324M \\
Unique words & 5M \\
Average sentence length (words) & 12 \\
\bottomrule
\end{tabular}
\caption{Overview of the used portion of the English-German CommonCrawl dataset \citep{elkishky_ccaligned_2020}.}
\label{tab:dataset_statistics}
\end{table}

\begin{figure}[htbp]
\centering
\includegraphics[width=0.55\linewidth]{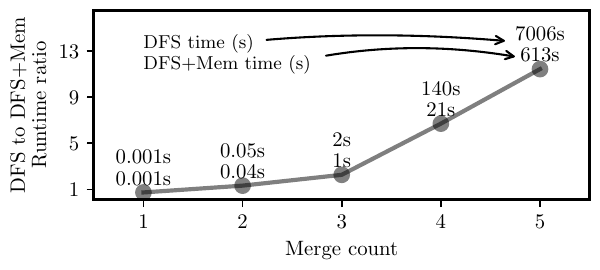}
\vspace{-4mm}
\caption{Comparison of runtimes for brute-force DFS and DFS with memoization.
Values above 1 correspond to DFS+memoization being $\times$ faster than DFS.
Points show average\footnotemark of runs on 5 different input strings (each 2 randomly sampled English sentences of 64 characters).}
\label{fig:dfs_speed}
\end{figure}
\footnotetext{Time measured on desktop AMD Ryzen 9 5900X.}

\clearpage

\begin{figure}[H]
\vspace{1cm}
\begin{minted}[fontsize={\fontsize{9.5}{9}\selectfont},linenos,xleftmargin=7mm]{python}
from collections import Counter, defaultdict
from typing import Union, Tuple, List

def fixed_pair_freqs(xs: Union[str, List]):
    pairs = defaultdict(int)
    prev_pair = None
    for (x, y) in zip(xs, xs[1:]):
        # increment only if the prev suffix does not match prefix
        # otherwise wrong estimate on `aaa`
        if (x,y) != prev_pair:
            pairs[x, y] += 1
            prev_pair = (x, y)
        else:
            # make sure to clear it so that `aaaa` is counted twice
            prev_pair = None
            
    pairs = list(pairs.items())
    pairs.sort(key=lambda x: x[1], reverse=True)
    return pairs

def bpe(xs: Union[str, List], V: int):
  for _ in range(V):
    top_pair = fixed_pair_freqs(xs)[0]
    xs = merge(list(xs), top_pair)
  return xs

def merge(xs: List, pair: Tuple):
  ys = []
  while xs:
    if tuple(xs[:2]) == pair:
      ys.append(pair)
      xs = xs[2:]
    else:
      ys.append(xs.pop(0))
  return ys 
\end{minted}
\captionof{mycode}{An implementation of \citeposs{sennrich-etal-2016-neural} greedy algorithm for BPE in Python with overlap-adjusted pair counts.}
\label{code:fixed_bpe}
\end{figure}

\end{document}